\title[ORC]{Online Learning and Coverage of Unknown Fields\\Using Random-Feature Gaussian Processes}
\newtheorem{assumption}{Assumption}
\DeclareMathOperator*{\argmax}{arg\,max}
\DeclareMathOperator*{\argmin}{arg\,min}
\begin{document}

\maketitle
\vspace{-1.5mm}
\begin{abstract}%
This paper proposes a framework for multi-robot systems to perform simultaneous learning and coverage of a domain of interest characterized by an unknown and potentially time-varying density function. To overcome the limitations of Gaussian Process (GP) regression, we employ Random Feature GP (RFGP) and its online variant (O-RFGP) which enables online and incremental inference. By integrating these with Voronoi-based coverage control and Upper Confidence Bound (UCB) sampling strategy, a team of robots can adaptively focus on important regions while refining the learned spatial field for efficient coverage. 
{The incremental update mechanism of O-RFGP naturally supports time-varying environments, allowing efficient adaptation without retaining historical data. Furthermore,} {to the best of our knowledge}, {we provide the first theoretical analysis of online learning and coverage through a regret-based formulation, establishing asymptotic no-regret guarantees in the time-invariant setting.
The effectiveness of the proposed framework is demonstrated through simulations with both time-invariant and time-varying density functions, along with a physical experiment
with a time-varying density function.}
\end{abstract}

\begin{keywords}%
  Multi-robot systems, online learning, coverage control, Gaussian processes
\end{keywords}

\section{Introduction}
\label{Section:Introduction}


Environmental exploration and monitoring play a vital role in applications ranging from disaster response to ecological preservation, where multi-robot systems provide significant advantages due to their resilience 
and effectiveness in data collection and coverage across large domains \citep{rizk2019cooperative}. To guide the deployment of robots, a common approach to characterize the relative importance of the environment is through a density distribution function across the domain of interest, i.e., a spatial field. For example, in wildfire monitoring or fighting \citep{pham2017distributed,rickenbach2024active, lin2025heterogeneous}, the density function can represent the heat distribution
or fuel accumulation, while in sensor network deployment \citep{pimenta2010simultaneous,schwager2011eyes,belal2023understanding},
it can encode the spatial concentration of target-relevant information.

However, in many real-world scenarios, such a density distribution is not known a priori. Hence, the robots must perform environmental exploration to learn the underlying \textbf{unknown} spatial field. To this end, recent approaches leverage Bayesian inference to learn such unknown spatial fields while simultaneously deploying robots to efficiently cover the area \citep{PCC_cov, santos2021multi, nakamura2022decentralized}. This involves the joint process of exploration and exploitation, where cooperative robots gradually improve their knowledge of the environment through learning the unknown density while simultaneously covering relatively more important or informative areas.


Bayesian inference typically employs the \textbf{Gaussian Process (GP)} as a probabilistic surrogate that learns unknown functions with uncertainty quantification \citep{srinivas2012information, williams2006gaussian, calandriello2019gaussian}. {Combined with the Upper Confidence Bound (UCB) sampling technique \citep{auer2000using}, and Voronoi-based coverage control \citep{Lloyd, cortes2004coverage}, the GP-based methods proposed in \citet{PCC_cov, santos2021multi, nakamura2022decentralized} enable simultaneous learning of an unknown density function and 
 coverage over the learned spatial field.}
However, these previous works
rely on the assumption that the density function is time-invariant, which limits their applicability in dynamic scenarios such as wildfire monitoring, where the density function is time-varying. Building upon \citet{nakamura2022decentralized}, \citet{pratissoli2025distributed}
represents a step toward the open problem of learning and coverage of unknown, time-varying density functions. 
{However, even setting aside the lack of theoretical analysis or formal performance guarantees, the proposed GP-based method in \citet{pratissoli2025distributed} 
faces difficulties in achieving accurate time-varying density estimation in simulation, 
Moreover, the physical experiment was conducted with a 
time-invariant density with a single abrupt switch instead of a continuously evolving one.
Hence, its effectiveness under 
time-varying density remains unverified.}


{A key limitation of GP-based methods lies in their cubic computational complexity with respect to the number of training data \citep{williams2006gaussian}.} 
To tackle this challenge, we leverage the \textbf{random feature-based Gaussian process (RFGP)} to alleviate the computational complexity. 
RFGP is an approximation method that enables scalable inference in kernel-based learning and Gaussian processes \citep{rahimi2007random,shen2019random,calandriello2019gaussian,mutny2018efficient,wang2018batched,lu2020ensemble}. RFGP brings significant improvements in computational cost with acceptable or even negligible performance loss. 
However, RFGP still requires all collected measurements to be associated with outputs, which limits its applicability in learning time-varying density functions. To this end, we resort to 
\textbf{Online RFGP (O-RFGP)}, 
which updates incrementally once new training data becomes available, {enabling real-time adaptation to changing environments. 
The lightweight computation and incremental update mechanism make O-RFGP particularly suitable for time-varying function learning.
Based on the efficient and 
online updates obtained via O-RFGP, our proposed framework enables real-time learning and coverage of time-varying spatial fields.
}
The main contributions of this paper are summarized as follows. 

{
\noindent\textbf{c1)}
We propose a simultaneous learning and coverage framework
for exploring and monitoring of unknown fields.
With its incremental and lightweight update, the proposed framework is capable of real-time operation and well-suited for deployment on physical robotic platforms.

\noindent\textbf{c2)}
To the best of our knowledge, our framework offers the first analytical result on online learning and coverage through a regret-based analysis in the time-invariant case. 
Under mild assumptions, it is proved that the proposed algorithm has asymptotic no-regret with respect to the coverage obtained with the corresponding ground truth
density. 

\noindent\textbf{c3)}
{The performance of the proposed framework is evaluated through simulations with both time-invariant and time-varying densities. And its effectiveness for real-time deployment is further demonstrated in a physical experiment with a time-varying density.}
The results of both simulations and the physical experiment provide encouraging evidence that the proposed framework can effectively maintain its performance and stability in time-varying settings.
}



\section{Problem Formulation}

Consider a group of $n$ planar robots operating in a convex domain of interest $\mathcal{D} \subset \mathbb{R}^2$. 
The position of robot $i \in N \triangleq \{1,\dots,n\}$ is denoted as ${x}^i \in \mathcal{D}$, and the positions of all robots are grouped into a matrix $\mathbf{x} \triangleq \left[ {x}^1, \cdots, {x}^n \right]^\top \in \mathbb{R}^{n \times 2}$. 
The problem of coverage control concerns designing strategies for a group of robots to achieve optimal coverage of $\mathcal{D}$ associated with a density distribution. Specifically, the density distribution function, $f(x): \mathbb{R}^2 \rightarrow \mathbb{R}_{>0}$, encodes the relative importance 
of a location $x \in \mathcal{D}$ in the sense that the higher the value of $f(x)$, the more important 
the location $x$ is. Typically, 
the subdomain (or region of dominance) that robot $i$ is in charge of is given by the Voronoi cell, $\mathcal{V}^i \triangleq \big\{ {q}\in \mathcal{D}: \lVert{q}- {x}^i\rVert_2 \leq \lVert {q}-{x}^j\rVert_2,\,\, \forall j \neq i \in N \big\}$.
When the density function $f$ is known, an optimal coverage can be asymptotically achieved by each robot~$i$ following the gradient flow of the locational cost function
\begin{align}
     \ell(\mathbf{x}, f) = \sum_{i=1}^n \int_{{x}\in \mathcal{V}^i} \! \lVert {x}-{x}^i \rVert_2^2 \,f({x}) \, d{x}
    \label{location_cost_objective}
\end{align}
with respect to $x^i$ (\citet{cortes2004coverage}), resulting in a centroidal Voronoi tessellation (CVT),
$\mathbf{c}^i = \frac{\int_{\mathcal{V}^i}\! x f({x}) \,d{x}}{\int_{\mathcal{V}^i}\!f({x})\,d{x}}, \,\, \forall i \in N,$
where $\mathbf{c}^i$ is the centroid (i.e., center of mass) of $\mathcal{V}^i$ with respect to the density function $f$. We denote a CVT by the matrix $\mathbf{c} \triangleq \left[\mathbf{c}^1, \cdots, \mathbf{c}^n \right]^\top \in \mathbb{R}^{n\times 2}$.

However, in practice, as discussed in Section \ref{Section:Introduction}, the density function $f$ can be unknown a priori. Then,
the group of robots needs to achieve an optimal coverage of
a surrogate density function learned from their measurements. Specifically, at position ${x}^i$, robot $i$ takes the measurement
\begin{align}
    y^i = f({x}^i)+ \epsilon,
    \label{eqn:measurementnoise}
\end{align}
where $\epsilon \sim \mathcal{N}  (0,\sigma^2)$. As a robot typically takes independent measurements
at each time using the same sensor, the measurement noise $\epsilon$ in \eqref{eqn:measurementnoise} is independent and identically distributed (i.i.d.) across time and space (\citet{PCC_cov}).

Let all variables related to time $t\in \{1,\dots, T\}$ be affixed with a subscript $(\cdot)_t$. For example, $\mathbf{x}_t \triangleq \left[ {x}_t^1, \cdots, {x}_t^n \right]^\top$ denotes the locations of all the $n$ robots at time $t$. As each robot takes a measurement $y^i_t$ at time $t$, all measurements at time $t$ are grouped in a vector $\mathbf{y}_t \triangleq \left[ y^1_t, \cdots, y^n_t \right]^\top \in \mathbb{R}^{n}$. We use the subscript $(\cdot)_{:t}$ to indicate the variables with the range of time steps from the beginning to $t$. For example,
the positions of all the $n$ robots from $t = 1$ to $t = T$
are grouped into the matrix $\mathbf{x}_{:T} \in \mathbb{R}^{nT\times 2}$ and their corresponding measurements 
are denoted as the vector $\mathbf{y}_{:T} \in \mathbb{R}^{nT}$. Let $f_t({x})$ denote the time-varying density value at location ${x}$ at time $t$, whereas $f({x})$ represents the time-invariant density function at location ${x}$ whose value remains the same over time.

As the true density function is unknown, the robots need to simultaneously learn and update the surrogate density function $\hat{f}_t$ at each time step based on collected training data, i.e., measurements from all robots, and follow a control law to efficiently cover the domain associated with $\hat{f}_t$. The centroid of $\mathcal{V}_t^i$ with respect to $\hat{f}_t$ becomes 
$\mathbf{c}^i_t = \frac{\int_{\mathcal{V}_t^i} {x} \hat{f}_t({x}) \,d{x}}{\int_{\mathcal{V}_t^i}\hat{f}_t({x})\,d{x}}$, 
and the control law for the team of robots is given by 
\citet{cortes2004coverage} 
\begin{align}
\dot{\mathbf{x}}_t 
= 
\kappa (\mathbf{c}_t - \mathbf{x}_t),
\label{eq:tv_ctrl}
\end{align}
where 
$\mathbf{c}_t \triangleq \left[ \mathbf{c}^1_t, \cdots, \mathbf{c}^n_t \right]^\top$, and $\kappa >0$ is a control gain.





\section{Density Learning
} \label{sec:density-learn}
In this section, we detail the density learning methods. The unknown density function $f(\cdot)$ is modeled as the realization of a GP, specified by a mean function, $\mu(\cdot): \mathcal{D} \rightarrow \mathbb{R}$, and a covariance (kernel) function, $k(\cdot, \cdot): \mathcal{D} \times \mathcal{D} \rightarrow \mathbb{R}$. In particular, 
$\sigma^2({x}) = k({x}, {x})$. We assume that the structures of $\mu$ and $k$ are characterized by certain hyperparameters $\rho^*$ and $\tau^*$ \citep{srinivas2009gaussian},
\begin{align}
    f(\cdot) \sim \mathcal{GP}(\mu(\cdot;\rho^*),k(\cdot,\cdot;\tau^*)). \label{eq:true_density}
\end{align}
For time-varying density, \eqref{eq:true_density} transforms into
    $f_t(\cdot) \sim \mathcal{GP}(\mu(\cdot;\rho_t^*),k(\cdot,\cdot;\tau_t^*))$, 
where $\rho^*_t$ and $\tau_t^*$ are hyperparameters at $t$. 


To learn an unknown function $f$ that has a GP prior, 
the noisy measurements $\mathbf{y}_{:t}$ at positions $\mathbf{x}_{:t}$ are leveraged. Then, the learned density value $\hat{f}_t$ at location ${x} \in \mathcal{D}$ at time $t$ can be represented by
\begin{align}
    \hat{f}_t({x}) &\sim \mathcal{GP}(\mu_t({x}),k_t({x},{x})),
\end{align}
where $\mu_t$ and $k_t$ are mean and covariance functions calculated based on $\mathbf{y}_{:t}$ and $\mathbf{x}_{:t}$. In the following content, we will introduce i) the GP-based
approach in Section \ref{sec:GP}, 
ii) the RFGP-based
approach in Section \ref{sec:RFGP}, and iii) the 
O-RFGP-based approach in Section \ref{sec:O-RFGP}, to calculate $\mu_t$ and $k_t$.


\begin{table}[h]
\vspace{-2mm}
\caption{Computational complexity and memory cost of GP, RFGP, and O-RFGP. 
}
\centering
\begin{tabular}{  |c| c| c| c|} 
\hline
Method& GP & RFGP  & O-RFGP \\ 
\hline
Computational complexity  & $\mathcal{O}(T^3)$ & $\mathcal{O}(TD^2)$ & $\mathcal{O}(D^2)$ \\ 
\hline
Memory cost & $\mathcal{O}(T^2)$ & $\mathcal{O}(TD)$ & $\mathcal{O}(D^2)$ \\ 
\hline
\end{tabular}
\label{table:computation_complex_memory_cost}
\end{table}

\subsection{Gaussian Process (GP)} \label{sec:GP}
Assuming the mean function is a function 
parameterized by {$\rho_t$} and the kernel function parameterized by {$\tau_t$},
the corresponding $\mu^{\text{GP}}$ and $k^{\text{GP}}$ at time step $t$ are defined as \citet{williams2006gaussian}
\begin{align}
    \mu^{\text{GP}}_{t}({x}) & = \mu({x}; \rho_t) +{\mathbf{k}_{t}}^\top (\mathbf{K}_t+ \sigma^2 \mathbf{I})^{-1} (\mathbf{y}_{:t-1}- \mu_{:t-1}(;\rho_t)), \label{eq:GP_mean}\\
    k_{t}^{\text{GP}} ({x},{x};\tau_t) &= k({x},{x}; \tau_t) - {\mathbf{k}_{t}}^\top (\mathbf{K}_t+ \sigma^2 \mathbf{I})^{-1} \mathbf{k}_{t}, \label{eq:GP_var}
\end{align}
where $\mu_{:t-1}(;\rho_t)\triangleq [\mu(\mathbf{x}_1;\rho_t), \cdots,\mu(\mathbf{x}_{t-1};\rho_t)]^\top$, $\mathbf{k}_{t} \in \mathbb{R}^{nt}$ is the vector with the {$((j-1)n+i)$}th entry as 
 $   k({x}^i_j, {x};\rho^t)$, 
and $\mathbf{K}_t \in \mathbb{R}^{nt \times nt}$ is the covariance matrix with the {$((j-1)n+i,(j'-1)n+i')$}th entry as
    $ k({x}^i_j, {x}^{i'}_{j'};\rho^t)$, 
    {$\forall i,i' \in \{1,\cdots,n\}$,} 
    {$\forall j,j' \in \{1,\cdots,t\}$}.
Specifically, at time step $t$, the measurements $\mathbf{y}_{:t}$ at locations $\mathbf{x}_{:t}$ accumulated by the $n$ robots are used to learn the density function. According to \citet{williams2006gaussian}, the parameters $\rho_t$ and $\tau_t$ can be obtained by 
\begin{align*}
    \rho_t, \tau_t = \argmin_{\rho,\tau} &[(\mu_{:t}(;\rho)-\mathbf{y}_{:t})^\top (K_t(;\tau) +\sigma^2 \mathbf{I})^{-1}(\mu_{:t}(;\rho)-\mathbf{y}_{:t})] + \log \left| K_t(;\tau) + \sigma^2 \mathbf{I} \right|,
\end{align*}
where $|\cdot|$ denotes the determinant of a matrix. Note that the
choices of the mean and covariance functions are not unique \citep{williams2006gaussian}.
{For example, in the case where the linear model and Radial Basis Function (RBF) 
kernel are used, we have:} 
$   \mu_t ({x};\rho_t)  \triangleq \rho_t^\top{x}$, and $k ({x},{x}';\tau_t) \triangleq \exp{\left(-\frac{\lVert {x}-{x}'\rVert_2^2}{2\tau_t^2}\right)}$.
{Since GP needs the memory of saving
the covariance matrix $K_t$ and requires the computation of
the inverse of $(\mathbf{K}_t+ \sigma^2 \mathbf{I})$, at time $T$, the memory cost is $\mathcal{O}(T^2)$ and the computational complexity is $\mathcal{O}(T^3)$, as shown in Table \ref{table:computation_complex_memory_cost}.}
As the number of samples increases, updating the mean and kernel functions of the GP becomes computationally intensive, limiting the applicability of GP in real-time robotic applications.

\subsection{Random Feature Gaussian Process (RFGP)} \label{sec:RFGP}
Random feature Gaussian Process (RFGP) provides an efficient way to scale GP by approximating the kernel using random features. It is more suitable for applications requiring GPs but constrained by computational resources, such as physical robotics implementations in real-time. 

We define the real-valued $2D$-dimensional random feature (RF) vector as
\begin{align}
    \phi_{\mathbf{v}}({x}) \triangleq  \frac{1}{\sqrt{D}} [\sin(\mathbf{v}_1^\top {x}), \cos(\mathbf{v}_1^\top {x}) \dots, \sin(\mathbf{v}_D^\top {x}), \cos(\mathbf{v}_D^\top {x})]^\top . 
\end{align}
{where $D $ denotes the number of random features and $\mathbf{v}_i$ is a random vector sampled from the spectral distribution of the kernel \citep{shen2019random}.}
And we define $\phi_{\mathbf{v}}(\mathbf{x}) \triangleq [\phi_{\mathbf{v}}({x^1}), \cdots, \phi_{\mathbf{v}}({x^n})]$.
A low-rank approximation of $\mathbf{K}_t$ can be achieved by using the standardized shift-invariant $\Bar{k}$, whose inverse Fourier transform can be approximated by \citet{lu2020ensemble} 
 $   \Bar{k}({x},{x}' ) \approx \frac{1}{D} \sum_{i=1}^D e^{j \mathbf{v}_i^\top ({x}-{x}')} = \phi_{\mathbf{v}}^\top({x}) \phi_{\mathbf{v}}({x}'),$
and $k = \sigma^2_{\theta} \Bar{k}$, where $\sigma^2_{\theta}$ is the magnitude of $k$.
A low-rank ($2D$) approximation of $\mathbf{K}^t$ is shown as
    $\Bar{\mathbf{K}}_t= \sigma^2_{\theta} \mathbf{\Phi}_t {\mathbf{\Phi}_t}^\top$, 
where $\mathbf{\Phi}_t \triangleq [\phi_{\mathbf{v}}(\mathbf{x}_1), \cdots, \phi_{\mathbf{v}}(\mathbf{x}_t)]^\top $.
Then, the $\mu^{\text{RF}}$ and $k^{\text{RF}}$ at time step $t$ can be calculated by 
    $\mu_t^{\text{RF}}({x})  = \phi_{\mathbf{v}}^\top({x}) \left( {\mathbf{\Phi}_t}^\top\mathbf{\Phi}_t + \frac{\tau_{\text{RF}} \mathbf{I}_{2D}}{\sigma_{\theta}^2}  \right)^{-1} {\mathbf{\Phi}_t}^\top \mathbf{y}_{:t},  ~~
    k_t^{\text{RF}} ({x},{x})
    = \phi_{\mathbf{v}}^\top({x}) \left( \frac{{\mathbf{\Phi}_t}^\top\mathbf{\Phi}_t}{\tau_{\text{RF}}} + \frac{\mathbf{I}_{2D}}{\sigma_{\theta}^2}  \right)^{-1} \phi_{\mathbf{v}}({x}).$
Let $\mathbf{\theta}_t = \left( {\mathbf{\Phi}_t}^\top\mathbf{\Phi}_t + \frac{\tau_{\text{RF}} \mathbf{I}_{2D}}{\sigma_{\theta}^2}  \right)^{-1} {\mathbf{\Phi}_t}^\top \mathbf{y}_{:t}$, $ \mathbf{\Sigma}_t = \left( \frac{{\mathbf{\Phi}_t}^\top\mathbf{\Phi}_t}{\tau_{\text{RF}}} + \frac{\mathbf{I}_{2D}}{\sigma_{\theta}^2}  \right)^{-1}$,
$\mu_t^{\text{RF}}$ and $k_t^{\text{RF}}$ 
at time step $t$ can be rewritten as
\begin{align}
    \mu_t^{\text{RF}}({x}) = \phi_{\mathbf{v}}^\top({x}) \mathbf{\theta}_t, 
    ~~k_t^{\text{RF}} ({x},{x}) = \phi_{\mathbf{v}}^\top({x}) \mathbf{\Sigma}_t \phi_{\mathbf{v}}({x}). \label{eq:RFGP}
\end{align}
Then, $\mathbf{\theta}_t$ and $\mathbf{\Sigma}_t$ can be considered as the parameters of RFGP. For example, if $\mu=\rho^\top {x}$, ideally, $\phi_{\mathbf{v}}^\top({x}) \mathbf{\theta}^* = \rho^\top {x}$, where $\mathbf{\theta}^*$ denotes the optimal model parameters. As such, 
{RFGP hugely reduces computational complexity and memory cost by using a fixed dimension RF vector to approximate the kernel, i.e., at time $T$, the memory cost is $\mathcal{O}(TD)$ and the computational complexity is $\mathcal{O}(TD^2)$, where $D$ is much smaller than $T$, 
as summarized in Table \ref{table:computation_complex_memory_cost}.}

\subsection{Online Random Feature Gaussian Process (O-RFGP)} \label{sec:O-RFGP}
The RF-based function approximant $\mu^{\text{RF}}$ and $k^{\text{RF}}$ readily accommodates online operation \citep{lu2020ensemble}, which is necessary for many time-critical applications, such as simultaneous learning and covering time-varying density functions.
The online approach allows interactive learning, i.e., the prediction at time step $t$ receives $\mathbf{x}_t$ at the beginning of slot $t$ and then updates after receiving measurements
at the end of slot $t$. Specifically, the online learning procedure is
shown as follows:

\begin{enumerate}
    \item Predict mean and covariance at location $\mathbf{x}_t$ as 
    \begin{align}
        \mu^{\text{O}}_{t}(\mathbf{x}_{t}) =  \phi_{\mathbf{v}}^\top(\mathbf{x}_{t}) \mathbf{\theta}_{t-1} , 
        ~~{\sigma^2_{t}}(\mathbf{x}_{t}) = k^{\text{O}}_{t}(\mathbf{x}_{t},\mathbf{x}_{t}) =\phi_{\mathbf{v}}^\top(\mathbf{x}_{t}) \mathbf{\Sigma}_{t-1} \phi_{\mathbf{v}}(\mathbf{x}_{t}).
        \label{eq:O-RFGP}
    \end{align}
    \item Upon receiving $\mathbf{y}_{t}$, update $\mathbf{\theta}_t$ and $\mathbf{\Sigma}_t$ as
    \begin{align}
        \mathbf{\theta}_{t} &= \mathbf{\theta}_{t-1} + \sigma_{t}^{-2}(\mathbf{x}_{t}) \mathbf{\Sigma}_t \phi_{\mathbf{v}}(\mathbf{x}_{t}) (\mathbf{y}_{t}- \mu^{\text{O}}_{t}(\mathbf{x}_{t})) , \label{eq:O-RFGP_theta}\\ 
        \mathbf{\Sigma}_{t} &= \mathbf{\Sigma}_{t-1} - \sigma_{t}^{-2}(\mathbf{x}_{t}) \mathbf{\Sigma}_{t-1} \phi_{\mathbf{v}}(\mathbf{x}_{t}) \phi_{\mathbf{v}}^\top(\mathbf{x}_{t}) \mathbf{\Sigma}_{t-1} . \label{eq:O-RFGP_Sigma}
\end{align}
\end{enumerate}

The O-RFGP can significantly save data storage and further reduce the computational complexity in the sense that at each time step, the computational complexity and the memory cost are both $\mathcal{O}(D^2)$, as summarized in Table \ref{table:computation_complex_memory_cost}. This iterative update mechanism enables O-RFGP to adapt to time-varying function learning by relying solely on the current measurements. Meanwhile, the low computational complexity of O-RFGP allows real-time robotic implementations.

\section{Learning-Aided Coverage Algorithm}
\label{Sec:ORC}

\begin{algorithm}[t] 
    \caption{
    O-RFGP-Aided Coverage Algorithm (ORC)
    }
    \begin{algorithmic}[1]
    
    \STATE  \textbf{Initialization:} \\
    Non-decreasing sequence $\{\beta_t\}_{t=1}^T$.\\
    Random locations $\mathbf{x}_0 = \left[ \mathbf{x}^1_0, \cdots, \mathbf{x}^n_0 \right]^\top$.\\
    Compute $\mu_0(\cdot)$ and $\sigma_0(\cdot)$.
    \FOR{$t=1 \dots T$}
        \STATE Compute the surrogate $\hat{f}_t$ for all ${x}\in \mathcal{D}$ by 
        $\hat{f}_t({x}) = \mu_{t-1}({x}) - \sqrt{\beta_t} \sigma_{t-1}({x}).$
        \STATE {Run control law \eqref{eq:tv_ctrl}
        for robots to reach $\mathbf{x}_t$, {i.e., a discrete integration step of \eqref{eq:tv_ctrl}}.
        }

        \STATE Determine the location  $ \tilde{{x}}_t = \argmax_{{x}\in \mathcal{D}} \sigma_{t-1}({x}),$ where to take additional measurement.
       
        \STATE 
        Obtain the measurements
        $ y_t^i = f_t({x}_t^i)+ \epsilon, ~~ \forall i \in \{1,\cdots,n\},~~
         \Tilde{y}_t = f_t(\tilde{{x}}_t)+ \epsilon.$
        \STATE Update $\mu_t(\cdot)$ via \eqref{eq:O-RFGP_theta} and $\sigma_t(\cdot)$ via \eqref{eq:O-RFGP_Sigma} 
        \ENDFOR
    \end{algorithmic}
        \label{alg:SLAC}
\end{algorithm}




In this section, first, we propose an algorithm that enables robots to simultaneously learn an unknown spatial field and efficiently cover the most recently acquired estimate of it. The algorithm, which integrates GP-based learning and Voronoi-based coverage control, combined with the UCB sampling strategy, guides the robots’ sensing and movement, incrementally driving them toward a CVT with respect to the evolving learned density function. Specifically, this is achieved by alternating between a prediction step, which allows the multi-robot team to select new sampling points, and a correction step, which allows it to update the learned density to be covered. A detailed outline of the proposed algorithm is presented in Algorithm \ref{alg:SLAC}. 
Note that the estimation step is based on O-RFGP, but can be directly replaced by GP or RFGP presented in Section \ref{sec:density-learn}. 
{Note that the estimation step is based on O-RFGP but can be directly replaced with GP or RFGP as presented in Section \ref{sec:density-learn}.
We define the cumulative regret as the sum of the  $r_t$ at time step $t$ as the difference between the locational cost with current locations $\mathbf{x}_t$ and the optimal locational cost
\begin{align} \label{cum_regret}
   R_T \triangleq \sum_{t=1}^T r_t, \; \text{with}~~ r_t = \ell(f,\mathbf{x}_t) - \min_{\mathbf{x}}\ell(f,\mathbf{x}) \geq 0.
\end{align}
A small cumulative regret $R_T$ indicates that the achieved coverage cost $\ell(f,\mathbf{x}_t)$ remains close to the optimal cost, meaning that the robots attain coverage performance comparable to the ideal case with full knowledge of the density function.
The theoretical guarantee of sub-linear regret in the time-invariant setting is formally established in Theorem~\ref{theorem1}, where the density function $f$, as in \eqref{eq:true_density}, does not vary over time.
The detailed regret analysis is provided in Appendix~\ref{sec:regret}.
The empirical results in Section \ref{sec:experiments} and Appendix \ref{sec:additionalexp} demonstrate that the cumulative regret remains sub-linear and is influenced by the same key parameters ($D$ and $\beta_t$), which suggests that similar principles govern the performance of O-RFGP in the time-varying setting.}






\section{Experiments} \label{sec:experiments}
{In this section, we demonstrate the effectiveness of the proposed framework ORC through both simulations and a physical experiment. The simulations are designed to evaluate performance under both time-invariant and time-varying spatial fields, thereby examining the framework’s capability to achieve accurate learning and efficient coverage in static as well as dynamic environments. 
We further conduct a hardware-in-the-loop experiment to demonstrate the
computational efficiency and 
effectiveness
of O-RFGP for deployment of physical multi-robot systems in real-time.
}

\begin{figure}[t]
\centering
\vspace{-0.1cm}
\subfigure[]{
\begin{minipage}[b]{0.2\textwidth}
\includegraphics[width=1\textwidth]{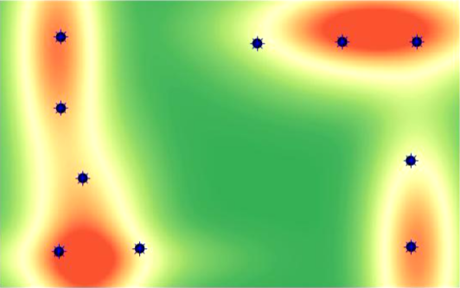}
\end{minipage}
\label{True_TI}
}
\subfigure[]{
\begin{minipage}[b]{0.2\textwidth}
\includegraphics[width=1\textwidth]{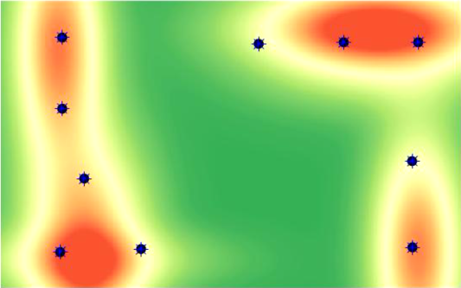}
\end{minipage}
\label{GP_TI}
}
\subfigure[]{
\begin{minipage}[b]{0.2\textwidth}
\includegraphics[width=1\textwidth]{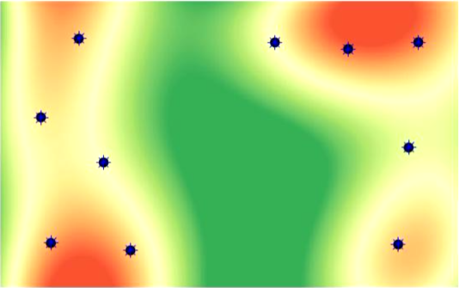}
\end{minipage}
\label{RFGP_TI}
}
\subfigure[]{
\begin{minipage}[b]{0.2\textwidth}
\includegraphics[width=1\textwidth]{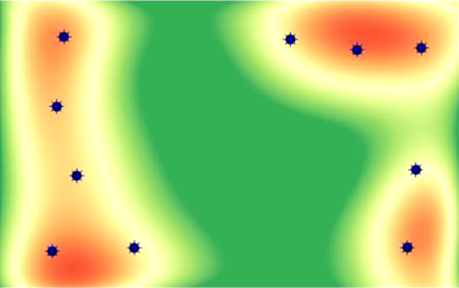}
\end{minipage}
\label{ORFGP_TI}
}
\caption{The final configuration (at time step $500$) of ten robots covering the time-invariant density distribution that is: (a) the ground truth, (b) learned using GP, (c) learned using RFGP, and (d) learned using O-RFGP. Areas in red (green) indicate higher (lower) importance. }
\label{CC_TI_GP_RFGP_ORFGP}
\vspace{-0.2cm}
\end{figure}

\begin{figure}[t]
\centering
\vspace{-1mm}
\subfigure[]{
\begin{minipage}[b]{0.3\textwidth}
\includegraphics[width=1\textwidth]{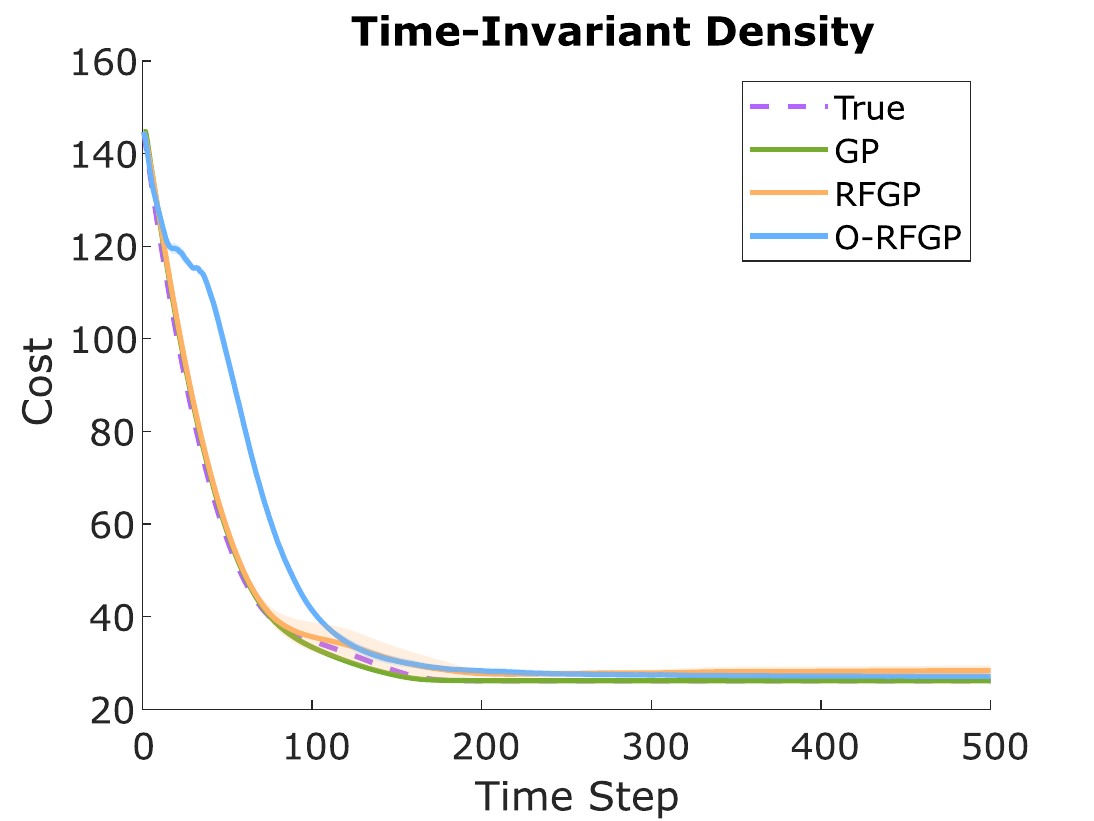}
\end{minipage}
\label{Cost_TI}
}
\subfigure[]{
\begin{minipage}[b]{0.3\textwidth}
\includegraphics[width=1\textwidth]{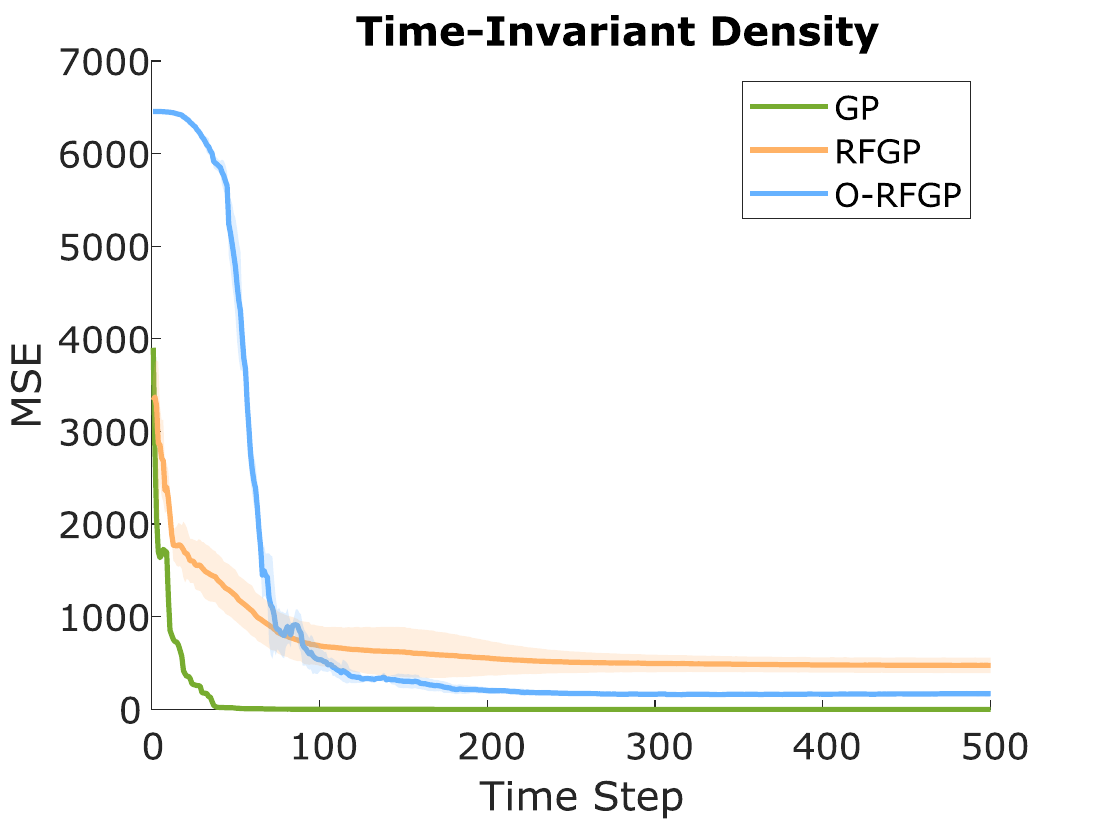}
\end{minipage}
\label{MSE_TI}
}
\caption{The evolution of the locational cost and the MSE between the learned density and the true density with respect to time steps. 
}
\label{Data_TI}
\vspace{-3mm}
\end{figure}




\begin{figure*}[t]
\centering
\vspace{-1mm}
\subfigure[$T=10$]{
\begin{minipage}[b]{0.18\textwidth}
\includegraphics[width=1\textwidth]{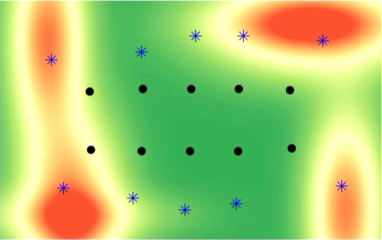}
\end{minipage}
\label{TV_true_iter10}
}
\subfigure[$T=30$]{
\begin{minipage}[b]{0.18\textwidth}
\includegraphics[width=1\textwidth]{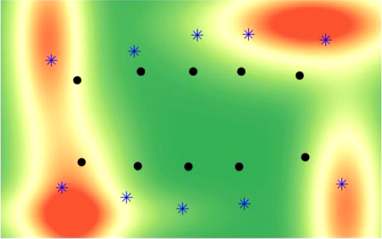}
\end{minipage}
\label{TV_true_iter30}
}
\subfigure[$T=100$]{
\begin{minipage}[b]{0.18\textwidth}
\includegraphics[width=1\textwidth]{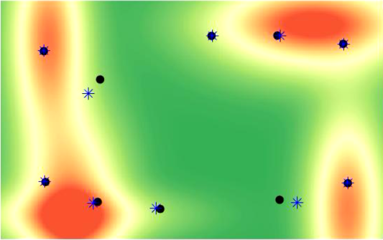}
\end{minipage}
\label{TV_true_iter100}
}
\subfigure[$T=1200$]{
\begin{minipage}[b]{0.18\textwidth}
\includegraphics[width=1\textwidth]{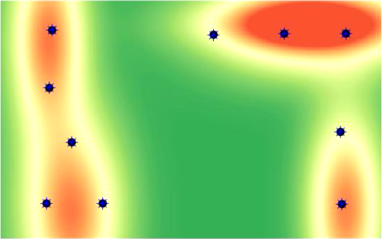}
\end{minipage}
\label{TV_true_iter1200}
}
\subfigure[$T=2000$]{
\begin{minipage}[b]{0.18\textwidth}
\includegraphics[width=1\textwidth]{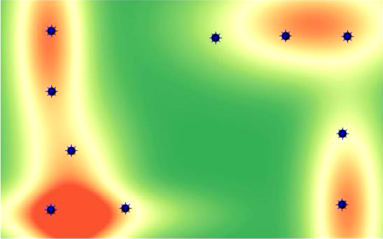}
\end{minipage}
\label{TV_true_iter2000}
}
\caption{The heatmaps of ten robots covering the true time-varying density distribution at different time steps, where the black dots represent the positions of robots and the blue asterisks represent the centers of mass of their Voronoi cells. 
}
\label{TV_True}
\vspace{-2mm}
\end{figure*}

\begin{figure*}[t]
\centering
\vspace{-1mm}
\subfigure[$T=10$]{
\begin{minipage}[b]{0.18\textwidth}
\includegraphics[width=1\textwidth]{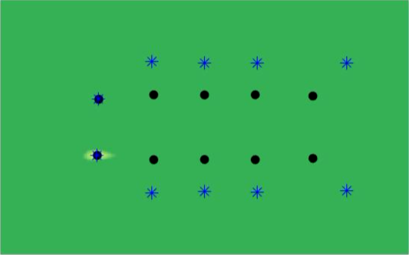}
\end{minipage}
\label{TV_ORFGP_sim_iter10}
}
\subfigure[$T=30$]{
\begin{minipage}[b]{0.18\textwidth}
\includegraphics[width=1\textwidth]{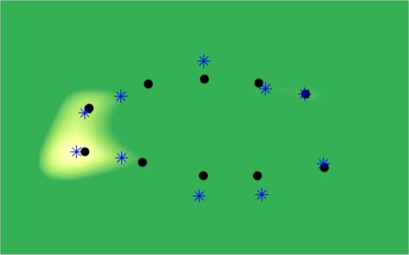}
\end{minipage}
\label{TV_ORFGP_sim_iter30}
}
\subfigure[$T=100$]{
\begin{minipage}[b]{0.18\textwidth}
\includegraphics[width=1\textwidth]{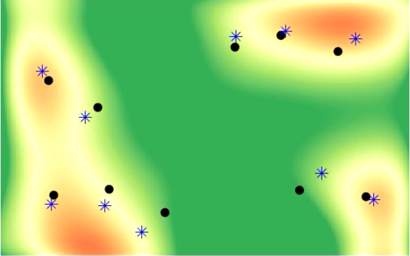}
\end{minipage}
\label{TV_ORFGP_sim_iter100}
}
\subfigure[$T=1200$]{
\begin{minipage}[b]{0.18\textwidth}
\includegraphics[width=1\textwidth]{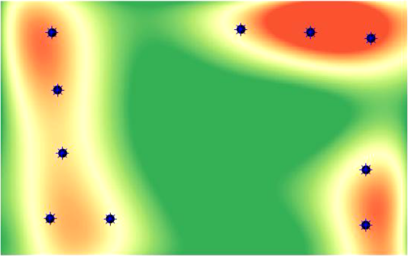}
\end{minipage}
\label{TV_ORFGP_sim_iter1200}
}
\subfigure[$T=2000$]{
\begin{minipage}[b]{0.18\textwidth}
\includegraphics[width=1\textwidth]{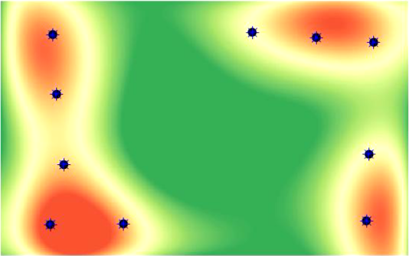}
\end{minipage}
\label{TV_ORFGP_sim_iter2000}
}
\caption{The heatmaps of $1$ trial of ten robots covering the learned time-varying density distribution using O-RFGP.
}
\vspace{-0.2cm}
\label{TV_ORFGP_sim}
\end{figure*}

\begin{figure*}[t]
\centering
\vspace{-2mm}
\subfigure[$T=10$]{
\begin{minipage}[b]{0.18\textwidth}
\includegraphics[width=1\textwidth]{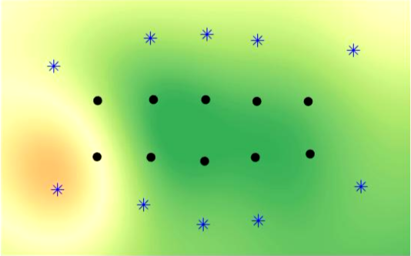}
\end{minipage}
\label{TV_GP_iter10}
}
\subfigure[$T=30$]{
\begin{minipage}[b]{0.18\textwidth}
\includegraphics[width=1\textwidth]{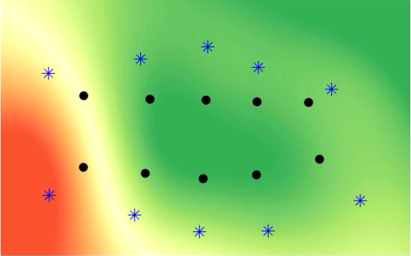}
\end{minipage}
\label{TV_GP_iter30}
}
\subfigure[$T=100$]{
\begin{minipage}[b]{0.18\textwidth}
\includegraphics[width=1\textwidth]{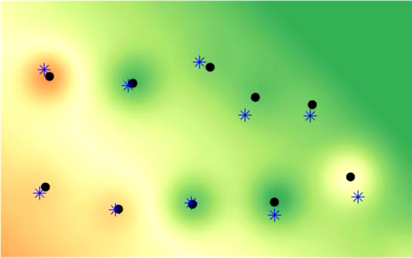}
\end{minipage}
\label{TV_GP_iter100}
}
\subfigure[$T=1200$]{
\begin{minipage}[b]{0.18\textwidth}
\includegraphics[width=1\textwidth]{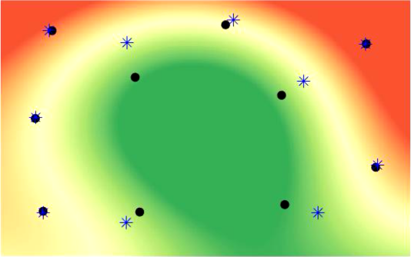}
\end{minipage}
\label{TV_GP_iter1200}
}
\subfigure[$T=2000$]{
\begin{minipage}[b]{0.18\textwidth}
\includegraphics[width=1\textwidth]{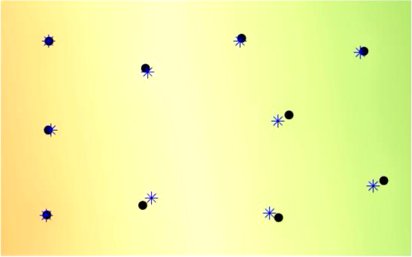}
\end{minipage}
\label{TV_GP_iter2000}
}
\caption{The heatmaps of $1$ trial of ten robots covering the learned time-varying density distribution using GP based on the measurements collected at the current time step
}
\label{TV_GP_sim}
\vspace{-2mm}
\end{figure*}

\begin{figure*}[t]
\centering
\vspace{-1mm}
\subfigure[$T=10$]{
\begin{minipage}[b]{0.18\textwidth}
\includegraphics[width=1\textwidth]{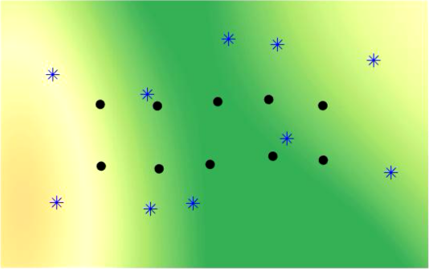}
\end{minipage}
\label{TV_RFGP_iter10}
}
\subfigure[$T=30$]{
\begin{minipage}[b]{0.18\textwidth}
\includegraphics[width=1\textwidth]{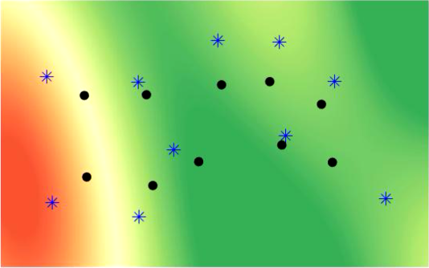}
\end{minipage}
\label{TV_RFGP_iter30}
}
\subfigure[$T=100$]{
\begin{minipage}[b]{0.18\textwidth}
\includegraphics[width=1\textwidth]{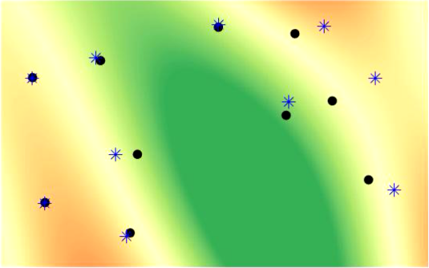}
\end{minipage}
\label{TV_RFGP_iter100}
}
\subfigure[$T=1200$]{
\begin{minipage}[b]{0.18\textwidth}
\includegraphics[width=1\textwidth]{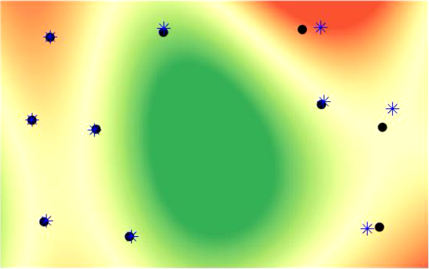}
\end{minipage}
\label{TV_RFGP_iter1200}
}
\subfigure[$T=2000$]{
\begin{minipage}[b]{0.18\textwidth}
\includegraphics[width=1\textwidth]{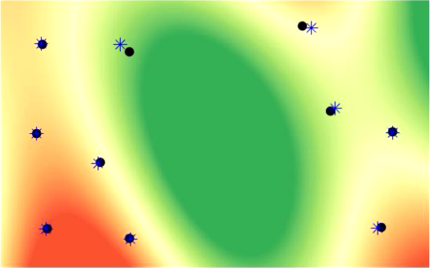}
\end{minipage}
\label{TV_RFGP_iter2000}
}
\caption{The heatmaps of $1$ trial of ten robots covering the learned time-varying density distribution using RFGP based on the measurements collected at the current time step. 
}
\vspace{-0.2cm}
\label{TV_RFGP_sim}
\end{figure*}

\begin{figure}[htb]
\centering
\vspace{-2mm}
\subfigure[]{
\begin{minipage}[b]{0.3\textwidth}
\includegraphics[width=1\textwidth]{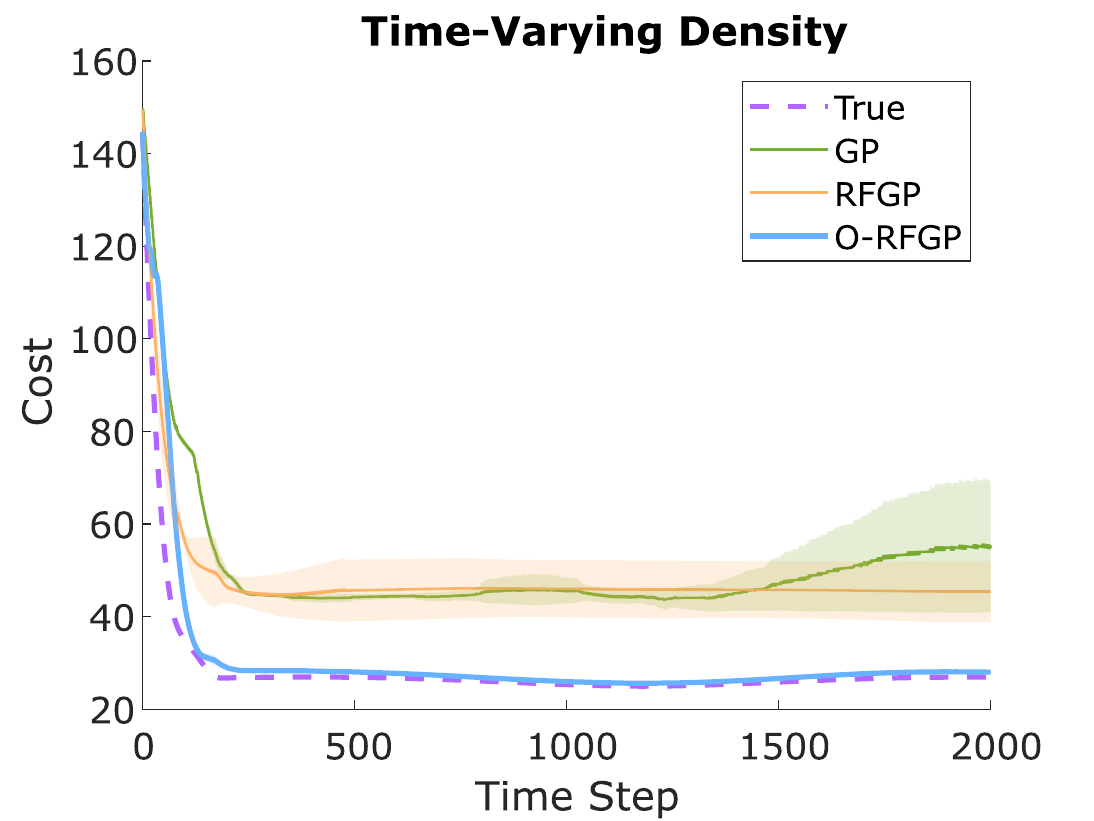}
\end{minipage}
\label{Cost_TV_windowsize1}
}
\subfigure[]{
\begin{minipage}[b]{0.3\textwidth}
\includegraphics[width=1\textwidth]{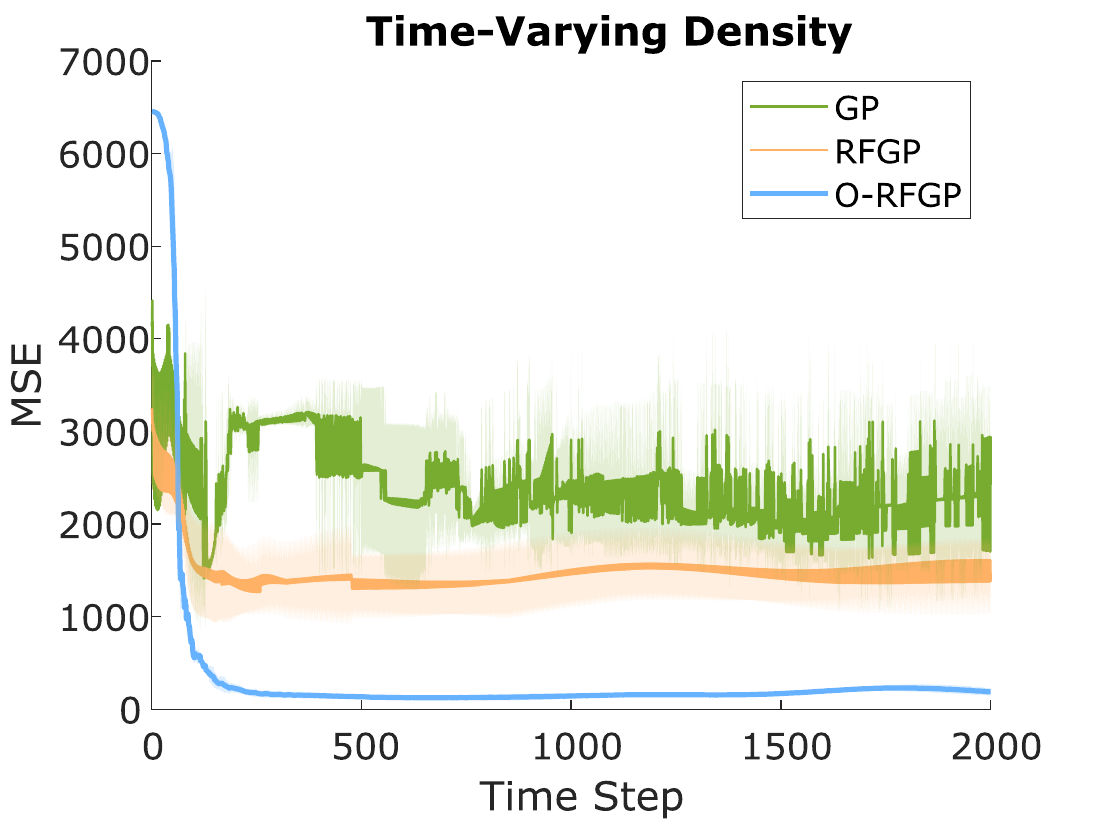}
\end{minipage}
\label{MSE_TV_windowsize1}
}
\caption{The evolution of the locational cost and the MSE between the learned density and the true density with respect to time step. The measurements collected by robots at the current time step are leveraged when using the GP, RFGP, and O-RFGP. 
}
\label{Data_TV_windowsize1}
\vspace{-0.3cm}
\end{figure}

\subsection{Simulations} 
\label{sec:simulations}
In this subsection, we conduct simulations to evaluate and compare the performance of the proposed algorithm on time-invariant and time-varying density functions separately. The detailed experimental settings are presented in Appendix~\ref{sec:experimentalsettings}. A summary of how to use each learning method to estimate the (time-invariant or time-varying) density is as follows: \textbf{ GP}: compute $\mu_t(\cdot)$ and $\sigma_t(\cdot)$ as equations \eqref{eq:GP_mean} and \eqref{eq:GP_var};
\textbf{RFGP:} compute $\mu_t(\cdot)$ and $\sigma_t(\cdot)$ as equations \eqref{eq:RFGP}; \textbf{O-RFGP:} compute $\mu_t(\cdot)$ and $\sigma_t(\cdot)$ as equations \eqref{eq:O-RFGP} by iteratively updating $\mathbf{\theta}$ and $\mathbf{\Sigma}$ as equations \eqref{eq:O-RFGP_theta} and \eqref{eq:O-RFGP_Sigma}. We use the mean squared error (MSE) between $f_t$ and $\hat{f}_t$ to evaluate the learning quality, and compute the locational cost using $f_t$ as equation \eqref{location_cost_objective} to evaluate the coverage quality.
{More experimental results on the sensitivity analysis of hyperparameters $D$ and $\beta_t$, and the computational efficiency comparison of GP, RFGP, and O-RFGP are presented in Appendix~\ref{sec:additionalexp}.}




\subsubsection{Time-invariant Density} \label{exp:time-invariant}



The measurements collected only at the current time step are leveraged when using O-RFGP, while all measurements collected from the beginning
up to the current time step are leveraged when using the GP
and RFGP for learning the time-invariant density. The simulation results are shown in Fig.\ref{CC_TI_GP_RFGP_ORFGP} and Fig.\ref{Data_TI}.
The final configurations of one trail, at time step $500$, of the robots covering the learned time-invariant density using the GP, RFGP and O-RFGP are shown in Fig.\ref{GP_TI}, Fig.\ref{RFGP_TI} and Fig.\ref{ORFGP_TI}, respectively. 
{Fig.\ref{Cost_TI} shows that the coverage qualities achieved by the GP, RFGP, and O-RFGP are almost the same as the coverage quality over the ground truth density. Fig.\ref{MSE_TI} and Fig.\ref{CC_TI_GP_RFGP_ORFGP} indicate that O-RFGP and RFGP have higher MSE compared with GP.}
These results are consistent with the regret analysis in Appendix~\ref{sec:regret} since RFGP and O-RFGP have larger $\gamma_T$. 
It is worth noting that although all three learning methods achieve similar coverage quality in the end, O-RFGP and RFGP do so with significantly lower computational cost than GP, as seen in Table~\ref{table:computation_complex_memory_cost} and Appendix~\ref{sec:computationalcomplexity}.
Unlike the RFGP and O-RFGP, the GP has cubic computational complexity in the number of training samples. This implies the limitation in applying the GP in real-time robotic implementations as discussed in Section~\ref{Section:Introduction}.

\subsubsection{Time-varying Density} \label{exp:time-varying}
We conduct simulations using the measurements that robots collect only at the current time step to avoid stale data for time-varying density learning. The heatmaps of one trial of ten robots covering the learned density using O-RFGP, GP, and RFGP are shown in Fig.\ref{TV_ORFGP_sim}, Fig.\ref{TV_GP_sim}, and Fig.\ref{TV_RFGP_sim}, respectively. 
Fig.\ref{Data_TV_windowsize1} shows that the coverage quality and MSE for density learning resulting from GP and RFGP are apparently higher than those from O-RFGP. 
In Fig.\ref{MSE_TV_windowsize1}, the MSE values of RFGP and GP are not only 
higher than that of O-RFGP but also exhibit high-frequency fluctuations. These fluctuations in MSE lead to higher and less stable coverage costs for GP and RFGP compared to O-RFGP, as shown in Fig.\ref{Cost_TV_windowsize1}.
Compared with Fig.\ref{TV_True}, the heatmap results in Fig.\ref{TV_GP_sim} and Fig.\ref{TV_RFGP_sim} also indicate that both GP and RFGP do not provide accurate estimation results in the time-varying setting. In contrast, Fig.\ref{TV_ORFGP_sim} shows that O-RFGP is the only method capable of accurately estimating the true density over time.
Since GP and RFGP are designed for learning the time-invariant density using all historical data, as discussed in Section \ref{Sec:ORC}, they are not supposed to adapt to the evolution of the time-varying density in a straightforward manner.
In contrast, O-RFGP can adapt to the evolution of the time-varying density, achieving almost the same coverage quality as the ground truth density across time, along with accurate estimation. Since O-RFGP iteratively updates the learned density using current measurements as the density varies over time, it is not influenced by out-of-date measurements and able to capture the dynamic property of the time-varying density, as discussed in Section~\ref{sec:O-RFGP}.

In summary, O-RFGP not only benefits from lightweight computation and memory, but also employs the incremental update structure that only relies on the most recent collected data.
These two properties enable real-time learning and adaptation, and the simulation results further
suggest the potential
of O-RFGP in handling time-varying density for real-world robotic applications.



\begin{figure*}[t]
\centering
\vspace{-2mm}
\subfigure[$T=10$]{
\begin{minipage}[b]{0.18\textwidth}
\includegraphics[width=1\textwidth]{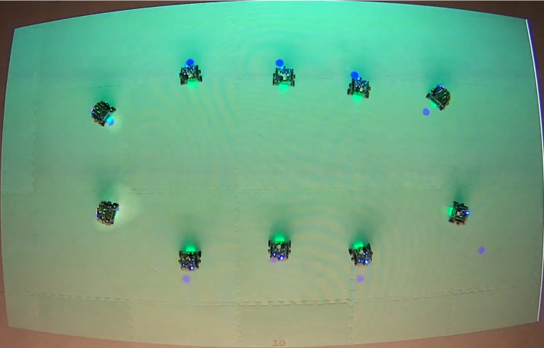}
\end{minipage}
\label{TV_ORFGP_real_iter10}
}
\subfigure[$T=30$]{
\begin{minipage}[b]{0.18\textwidth}
\includegraphics[width=1\textwidth]{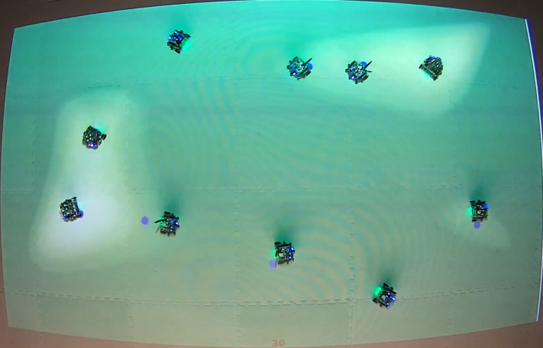}
\end{minipage}
\label{TV_ORFGP_real_iter30}
}
\subfigure[$T=100$]{
\begin{minipage}[b]{0.18\textwidth}
\includegraphics[width=1\textwidth]{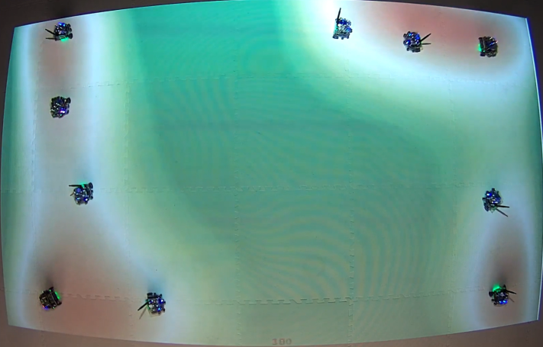}
\end{minipage}
\label{TV_ORFGP_real_iter100}
}
\subfigure[$T=1200$]{
\begin{minipage}[b]{0.18\textwidth}
\includegraphics[width=1\textwidth]{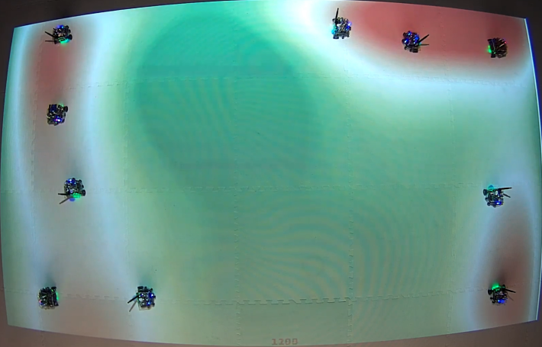}
\end{minipage}
\label{TV_ORFGP_real_iter1200}
}
\subfigure[$T=2000$]{
\begin{minipage}[b]{0.18\textwidth}
\includegraphics[width=1\textwidth]{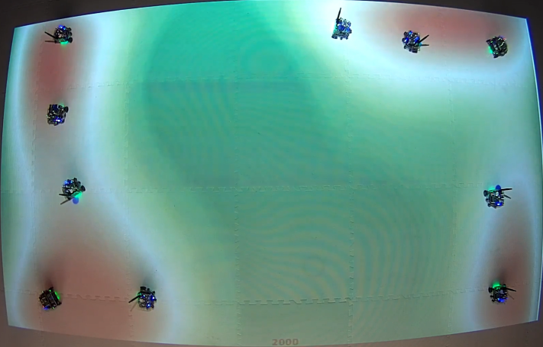}
\end{minipage}
\label{TV_ORFGP_real_iter2000}
}
\caption{The screenshots of ten differential-drive wheeled robots covering the learned time-varying density distribution using O-RFGP. The blue dots represent the centers of mass of their corresponding Voronoi cells. 
}
\label{TV_ORFGP_real}
\vspace{-0.3cm}
\end{figure*}

\begin{figure}[htb]
\centering
\vspace{-2mm}
\subfigure[]{
\begin{minipage}[b]{0.3\textwidth}
\includegraphics[width=1\textwidth]{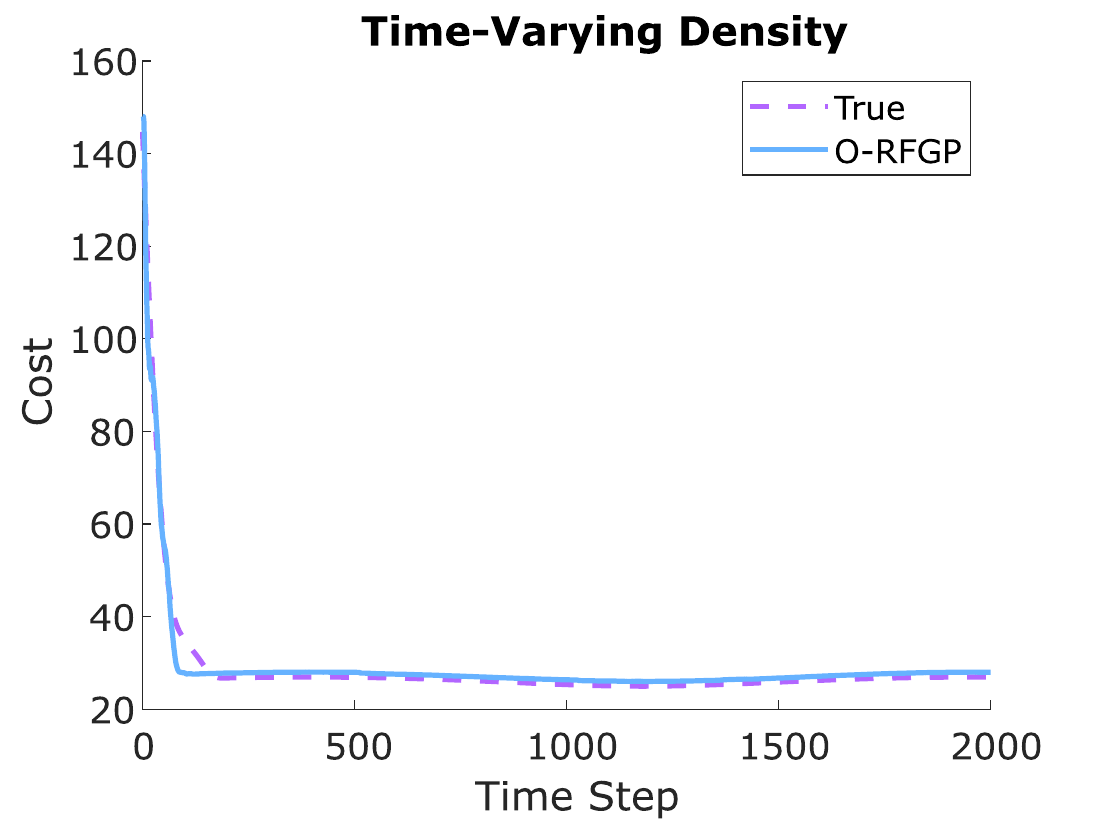}
\end{minipage}
\label{Cost_TV_real}
}
\subfigure[]{
\begin{minipage}[b]{0.3\textwidth}
\includegraphics[width=1\textwidth]{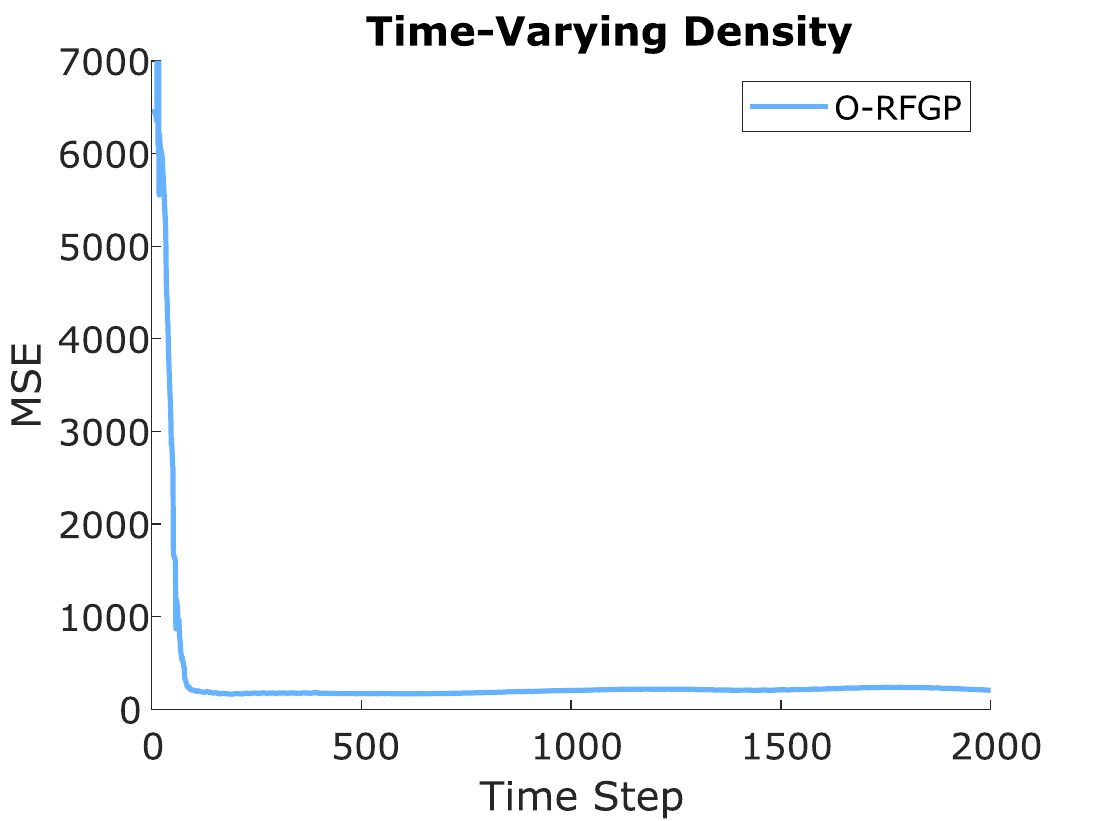}
\end{minipage}
\label{MSE_TV_real}
}
\caption{The evolution of the locational cost and the MSE between the learned density in the physical experiment (corresponding to Fig.\ref{TV_ORFGP_real}) and the true density in the simulation (corresponding to Fig.\ref{TV_True}). The measurements collected by robots at the current time step are leveraged to learn the time-varying density using O-RFGP.}
\vspace{-3mm}
\label{Data_TV_real}
\end{figure}






\subsection{Hardware-in-the-Loop Experiment} \label{sec:physical experiment}

To highlight the performance and effectiveness of ORC in learning time-varying density on physical robotic systems in real-time,
we implement it
on ten differential-drive wheeled robots for simultaneous learning and covering an unknown time-varying density.
The size of the physical testbed~\citep{pickem2017robotarium} is the same as the domain of interest $\mathcal{D}$ in numerical simulations. The location of Robot $i$, ${x}^i$, corresponds to the position of a point near the wheel axle of the robot. Single integrator dynamics of ${x}^i$ is mapped by the near-identity diffeomorphism~\citep{olfati2002near} to the unicycle dynamics of the differential-drive wheeled robot.
The experimental results are shown in Fig.\ref{TV_ORFGP_real} and Fig.\ref{Data_TV_real}. 
As shown in Fig.\ref{Cost_TV_real}, the evolution of the locational cost for the time-varying density learned by O-RFGP closely matches that of the ground truth, indicating comparable coverage quality. In Fig.\ref{MSE_TV_real}, the MSE between the learned and true densities also converges, confirming the effectiveness of O-RFGP in density learning.
Comparing Fig.\ref{TV_True}, Fig.\ref{TV_ORFGP_sim}, Fig.\ref{Data_TV_windowsize1}, Fig.\ref{TV_ORFGP_real}, and Fig.\ref{Data_TV_real}, O-RFGP performs consistently across both simulation and physical experiments. It is demonstrated that Algorithm \ref{alg:SLAC} enables multi-robot systems to efficiently learn and cover unknown, time-varying spatial fields in real time.

\section{Conclusion}
This paper proposes ORC (Algorithm \ref{alg:SLAC}), a framework that integrates O-RFGP and Voronoi-based coverage control to enable a team of robots to learn an unknown density function and achieve efficient spatial coverage. 
To the best of our knowledge, we provide, for the first time, a regret-based theoretical analysis that guarantees the performance of online learning–aided coverage control in time-invariant settings. 
The effectiveness of ORC is validated through simulations, and its strong potential in handling time-varying density fields is further demonstrated through additional simulations and a physical experiment, confirming its capability for real-time learning and coverage.


\bibliography{main}
\begin{appendices}
    \appendix
\section{Regret Analysis} \label{sec:regret}
In this section, we provide the regret analysis in the time-invariant setting, i.e. the density function $f$, as in \eqref{eq:true_density}, does not vary with respect to time.
Remind that the regret, $r_t$, and the cumulative regret, $R_T$, are defined in equation \eqref{cum_regret}:
\begin{align*} 
    r_t &= \ell(f,\mathbf{x}_t) - \min_{\mathbf{x}}\ell(f,\mathbf{x}) \geq 0, \\
    R_T &\triangleq \sum_{t=1}^T r_t. \nonumber
\end{align*}
The \textit{asymptotic no-regret} property, i.e., $\lim_{T\rightarrow \infty} \frac{R_T}{T}=0$, is based on the sub-linear growth of $R_T$ in $T$ achieved by \eqref{eq:sequence}, which will be proved in the following content.
A small cumulative regret $R_T$ indicates that the achieved coverage cost $\ell(f,\mathbf{x}_t)$ remains close to the optimal cost, meaning that the robots attain coverage performance comparable to the ideal case with full knowledge of the true density function.
In Algorithm \ref{alg:SLAC}, the non-decreasing sequence $\{ \beta_t \}_{t=1}^T$ is required to achieve sub-linear regret, especially with the UCB sampling strategy, e.g., \citet{calandriello2019gaussian,srinivas2012information,PCC_cov}. Specifically we let
\begin{equation}
\label{eq:sequence}
\beta_t \triangleq 2\log\left( |\mathcal{D}| \frac{t^2 \pi^2}{6\delta}\right)
\end{equation}
with $\delta \in (0,1]$. The \textit{asymptotic no-regret} property, i.e., $\lim_{T\rightarrow \infty} \frac{R_T}{T}=0$, is based on the sub-linear growth of $R_T$ in $T$ achieved by \eqref{eq:sequence}. In Section \ref{sec:experiments}, we set $\beta_t$ as a constant value and observe that sublinear regret still remained.
To proceed, we will need the following assumptions:
\begin{assumption}
    The optimal hyperparameters of $f(\cdot)$ exit, denoted as $\rho^*$ and $\tau^*$ in equation~\eqref{eq:true_density}.\label{assump_1}
\end{assumption}
\begin{assumption}
    The domain $\mathcal{D}$ is discretized. \label{assump_2}
\end{assumption}
\begin{assumption}
    The control law  solves 
    the optimization problem $\mathbf{x}_t = \argmin_{\mathbf{x} }
    \ell(\hat{f}_t,\mathbf{x})
    $. 
    \label{assump_3}
\end{assumption}
Assumption \ref{assump_1} simplifies the proof: {it implies that the function is sampled from a known GP and has a low reproducing kernel Hilbert space (RKHS) norm \citep{srinivas2009gaussian}.} 
Note that $f(\cdot)$ represents the true density function, so $\rho^*$ and $\tau^*$ are considered as hyperparameters of the true density and thus not accessible to the robots. Assumption \ref{assump_2} can also be further relaxed since the results can be generalized to a compact domain, and $\mathcal{D}$ is discretized either in the simulations or the physical experiment of this paper.
Assumption~\ref{assump_3} is a theoretical idealization used for the regret analysis. However, the Voronoi-based coverage control law \eqref{eq:tv_ctrl} typically achieves local optima without guaranteeing global optimality, e.g., \citet{cortes2004coverage,luo2019distributed}. Nevertheless, extensive experiments in Section~\ref{sec:experiments} empirically suggest that a local optimum (a CVT) reached by Algorithm~\ref{alg:SLAC} performs well in practice and may, in some cases, reach the global optimum. Importantly, our proposed framework provides the flexibility that alternative control laws can be directly incorporated without changing other components of Algorithm~\ref{alg:SLAC}. Should future theoretical work prove that certain control laws attain global optimality, our analysis in this paper remains valid.



To establish the static regret bound {using different density learning methods},
we first introduce the following lemma.

\begin{lemma} \label{lemma1}
    Pick $\delta \in (0,1]$, and set $\beta_t = 2\log\left( |\mathcal{D}| \frac{\pi_t}{\delta}\right)$, where $\sum_{t \geq 1}\pi_t^{-1}=1, \pi_t >0$. Then,
    \begin{align*}
        \left| f({x}) - \mu_{t-1}({x}) \right| \leq \sqrt{\beta_t} \sigma_{t-1}({x})
    \end{align*}
    holds for any ${x}\in \mathcal{D}$ and any $t\geq 1$ with probability at least $1-\delta$.
\end{lemma}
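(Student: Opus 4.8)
The plan is to follow the standard Gaussian-process confidence-bound argument (in the style of the GP-UCB analysis of \cite{srinivas2012information, srinivas2009gaussian}), reducing the claim to a single scalar Gaussian tail inequality combined with two union bounds. The starting observation is that, under Assumption~\ref{assump_1}, the inference hyperparameters coincide with the true $\rho^*,\tau^*$, so $f$ is genuinely drawn from the GP model used for learning. Consequently, conditioning on the measurements collected through time $t-1$, the posterior of $f$ at any fixed location $x$ is exactly Gaussian, $f(x)\mid\mathbf{y}_{:t-1}\sim\mathcal{N}(\mu_{t-1}(x),\sigma_{t-1}^2(x))$, with the computed posterior mean and variance. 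This is the step that makes the whole argument go through and is precisely where Assumption~\ref{assump_1} is needed.

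First I would fix a location $x$ and a time $t$ and work conditionally on the past observations, so that $\mu_{t-1}(x)$ and $\sigma_{t-1}(x)$ are deterministic and $Z\triangleq (f(x)-\mu_{t-1}(x))/\sigma_{t-1}(x)$ is standard normal. Applying the elementary two-sided tail bound $\Pr[|Z|>c]\le e^{-c^2/2}$ with $c=\sqrt{\beta_t}$ gives
\[
\Pr\bigl[\,|f(x)-\mu_{t-1}(x)|>\sqrt{\beta_t}\,\sigma_{t-1}(x)\,\bigr]\le e^{-\beta_t/2}.
\]
Since this right-hand side is a fixed number independent of the conditioning, the bound holds unconditionally as well. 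Substituting $\beta_t=2\log\!\left(|\mathcal{D}|\pi_t/\delta\right)$ collapses it to $\delta/(|\mathcal{D}|\pi_t)$.

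Next I would take two union bounds. Assumption~\ref{assump_2} makes $\mathcal{D}$ finite with cardinality $|\mathcal{D}|$, so summing the per-point failure probability over all $x\in\mathcal{D}$ contributes a factor $|\mathcal{D}|$, which cancels the $|\mathcal{D}|$ in the denominator and leaves $\delta/\pi_t$ at each time. Summing over all $t\ge 1$ then yields total failure probability at most $\delta\sum_{t\ge 1}\pi_t^{-1}=\delta$, using the assumed normalization $\sum_{t\ge1}\pi_t^{-1}=1$. Taking complements delivers the claimed simultaneous bound over all $x$ and all $t$ with probability at least $1-\delta$. The specific choice $\pi_t=\pi^2 t^2/6$ in \eqref{eq:sequence} satisfies this normalization, recovering the $\beta_t$ used in Algorithm~\ref{alg:SLAC}.

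I do not expect a genuine obstacle here, as the result is essentially a textbook confidence inequality; the only points requiring care are the conditioning argument that legitimizes treating $(f(x)-\mu_{t-1}(x))/\sigma_{t-1}(x)$ as exactly standard normal despite $\mu_{t-1},\sigma_{t-1}$ being data-dependent, and ensuring the union over the infinite time horizon is controlled by the summability of $\pi_t^{-1}$ rather than a naive uniform bound. Both are handled cleanly by the steps above, and the discretization in Assumption~\ref{assump_2} is what keeps the spatial union bound finite.
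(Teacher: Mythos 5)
Your proposal is correct and follows essentially the same route as the paper's proof in Appendix A: a scalar Gaussian tail bound $\Pr[|Z|>c]\le e^{-c^2/2}$ applied to the posterior at each fixed $x$, a union bound over the discretized domain contributing the factor $|\mathcal{D}|$, and a union bound over $t$ controlled by $\sum_{t\ge 1}\pi_t^{-1}=1$. If anything, you are slightly more explicit than the paper about the conditioning step that makes $(f(x)-\mu_{t-1}(x))/\sigma_{t-1}(x)$ exactly standard normal and about the final union over the infinite time horizon, which the paper leaves implicit.
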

\begin{proof}
For GP, see Lemma 5.6 of \citet{srinivas2012information}, while for RFGP and O-RFGP, see Appendices \ref{appendix_lemma1} and \ref{converge_ORFGP}.
\end{proof}

We assume that the high probability event of Lemma~\ref{lemma1} holds. Next, we will work towards the upper bound of accumulated regret through the following theorem.

\begin{theorem} \label{theorem1}
    Let $\delta \in (0,1]$ and $\beta_t = 2\log\left( |\mathcal{D}| \frac{t^2 \pi^2}{6\delta}\right)$. If the density function $f(\cdot)$ is the realization of a Gaussian Process with mean $\mu(\cdot)$ and covariance function $k(\cdot,\cdot)\leq 1$, then running Algorithm \ref{alg:SLAC} ensures that 
    \begin{align*}
        \forall T\geq1, ~~ R_t &\leq \sqrt{c_1 (n+1)T \beta_T \gamma_T}
    \end{align*}
    holds with probability at least $1-\delta$, where $c_1 \triangleq 8 |D|^2 \max_{{x},{x}'} \frac{\lVert{x}-{x}'\rVert^4_2}{\log(1+\sigma^{-2})}$. For kernels {such that $\gamma_T = \mathcal{O}(T)$}, Algorithm~\ref{alg:SLAC} has \textit{asymptotic no-regret}.
\end{theorem}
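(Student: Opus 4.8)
The plan is to follow the GP-UCB regret template of \cite{srinivas2012information, PCC_cov}, adapted to the locational cost \eqref{location_cost_objective}, whose dependence on the density is \emph{linear} once the robot configuration $\mathbf{x}$ (and hence the Voronoi partition) is fixed. First I would invoke Lemma~\ref{lemma1}, which on the assumed high-probability event sandwiches the truth as $\hat{f}_t(x) \le f(x) \le \hat{f}_t(x) + 2\sqrt{\beta_t}\,\sigma_{t-1}(x)$ for every $x \in \mathcal{D}$, since the surrogate $\hat{f}_t = \mu_{t-1} - \sqrt{\beta_t}\,\sigma_{t-1}$ is exactly the lower confidence bound. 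Because the integrand $\lVert x-x^i\rVert_2^2 f(x)$ is nonnegative and the cells $\{\mathcal{V}^i\}$ partition $\mathcal{D}$ independently of $f$, the map $g \mapsto \ell(g, \mathbf{x})$ is additive and monotone in $g$, so these pointwise bounds transfer directly to the cost.

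Next I would bound the instantaneous regret $r_t = \ell(f,\mathbf{x}_t) - \ell(f,\mathbf{x}^\star)$, with $\mathbf{x}^\star \in \argmin_{\mathbf{x}}\ell(f,\mathbf{x})$, via the decomposition
\[
r_t = \bigl[\ell(f,\mathbf{x}_t)-\ell(\hat f_t,\mathbf{x}_t)\bigr] + \bigl[\ell(\hat f_t,\mathbf{x}_t)-\ell(\hat f_t,\mathbf{x}^\star)\bigr] + \bigl[\ell(\hat f_t,\mathbf{x}^\star)-\ell(f,\mathbf{x}^\star)\bigr].
\]
The middle bracket is $\le 0$ because Assumption~\ref{assump_3} makes $\mathbf{x}_t$ a minimizer of $\ell(\hat f_t,\cdot)$; the third bracket is $\le 0$ because $\hat f_t \le f$ together with monotonicity; and the first bracket is at most $2\sqrt{\beta_t}\sum_i\int_{\mathcal{V}^i_t}\lVert x-x^i_t\rVert_2^2\,\sigma_{t-1}(x)\,dx$. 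Bounding $\lVert x-x^i_t\rVert_2^2 \le M := \max_{x,x'}\lVert x-x'\rVert_2^2$, recombining the cell integrals into $\int_{\mathcal{D}}\sigma_{t-1}(x)\,dx \le |\mathcal{D}|\max_{x}\sigma_{t-1}(x) = |\mathcal{D}|\,\sigma_{t-1}(\tilde x_t)$ --- where the UCB rule $\tilde x_t = \argmax_x \sigma_{t-1}(x)$ is precisely what controls the integral --- yields $r_t \le 2\sqrt{\beta_t}\,M\,|\mathcal{D}|\,\sigma_{t-1}(\tilde x_t)$.

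Then I would square, sum, and apply Cauchy--Schwarz, using that $\{\beta_t\}$ is non-decreasing, to get $R_T \le 2M|\mathcal{D}|\sqrt{T\beta_T\sum_{t}\sigma_{t-1}^2(\tilde x_t)}$. It remains to relate $\sum_t\sigma_{t-1}^2(\tilde x_t)$ to the maximum information gain $\gamma_T$. Since $k \le 1$ gives $\sigma_{t-1}^2 \le 1$, the elementary inequality $s^2 \le \log(1+\sigma^{-2}s^2)/\log(1+\sigma^{-2})$ on $[0,1]$ converts each variance into a per-sample log term. Ordering the $n+1$ measurements of round $t$ (the $n$ robots plus the UCB probe $\tilde x_t$) so that $\tilde x_t$ comes first, the chain rule for mutual information identifies $\tfrac12\sum_t\log(1+\sigma^{-2}\sigma_{t-1}^2(\tilde x_t))$ as part of the total information gain of all $(n+1)T$ samples, which is $\le \gamma_{(n+1)T}$. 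The subadditivity of information gain, $\gamma_{(n+1)T}\le (n+1)\gamma_T$ (a consequence of its monotone submodularity), introduces the factor $(n+1)$, and substituting gives $R_T^2 \le 8|\mathcal{D}|^2 M^2 (n+1)T\beta_T\gamma_T/\log(1+\sigma^{-2}) = c_1(n+1)T\beta_T\gamma_T$, which is the claim.

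I expect the information-gain accounting across the $(n+1)$-point batches to be the crux. The delicate points are (i) justifying that $\sigma_{t-1}^2(\tilde x_t)$ coincides with the conditional variance in the mutual-information chain rule when the UCB probe is placed first in its batch, and (ii) the passage $\gamma_{(n+1)T}\le (n+1)\gamma_T$ via subadditivity. A further subtlety is that, for the RFGP and O-RFGP instantiations, the posterior mean/variance and hence both Lemma~\ref{lemma1} and the information-gain identity are taken with respect to the random-feature surrogate rather than the exact GP, so these facts must first be established in the appendices before the argument applies verbatim. Finally, dividing by $T$ gives $R_T/T \le \sqrt{c_1(n+1)\beta_T\gamma_T/T}$; since $\beta_T = \mathcal{O}(\log T)$, \textit{asymptotic no-regret} follows whenever $\beta_T\gamma_T = o(T)$, i.e.\ for kernels whose $\gamma_T$ grows sufficiently slowly.
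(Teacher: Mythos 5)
Your proposal is correct and follows essentially the same route as the paper's proof: the three-bracket decomposition collapses to the paper's inequality $r_t \le \ell(f,\mathbf{x}_t)-\ell(\hat f_t,\mathbf{x}_t)$ via Assumption~\ref{assump_3} and the lower-confidence-bound property $\hat f_t \le f$, the bound $r_t \le 2c_0\sqrt{\beta_t}\,\sigma_{t-1}(\tilde x_t)$ with $c_0=|\mathcal{D}|\max_{x,x'}\lVert x-x'\rVert_2^2$ matches the paper's, and the conversion of $\sum_t\sigma_{t-1}^2(\tilde x_t)$ into $(n+1)\gamma_T$ via the mutual-information chain rule and subadditivity is exactly the paper's universal-bound step. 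The subtleties you flag (placement of the UCB probe in the conditioning order, and re-establishing Lemma~\ref{lemma1} for the RFGP/O-RFGP surrogates) are precisely the points the paper handles in its Appendices A--D.
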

\begin{proof}
See Appendix \ref{appendix_theorem1}.
\end{proof}

Note that the estimation step can also be implemented using either GP or RFGP. As shown in \citet{srinivas2012information}, $\gamma_T$ scales sub-linearly with $T$ for GP with many kernels, e.g., $\gamma_T=\mathcal{O}(\log T)$ with RBF kernel. 
However, $\gamma_T$ for RFGP and O-RFGP usually cannot achieve information as well as GP since RFGP usually suffers from variance starvation \citep{calandriello2019gaussian, wang2018batched}, e.g., $\gamma_T=\mathcal{O}(\sqrt{T})$ with RBF kernel, details are shown in Appendix~\ref{inf_gain_RFGP}. Nevertheless, RFGP and O-RFGP still ensure that $\gamma_T$ scales sub-linearly, i.e., $\gamma_T = \mathcal{O}\left(2D\log\left(\frac{T}{D}\right) \right)$, with details in Appendix~\ref{inf_gain_RFGP}.

As discussed in \citet{lu2020ensemble,shen2019random}, O-RFGP could potentially handle time-varying density, $f_t$. 
We leave the regret analysis for the time-varying setting as future work. Nonetheless,
Algorithm~\ref{alg:SLAC} is evaluated with a time-varying density function in Section \ref{exp:time-varying} and Section \ref{sec:physical experiment}, where the results are {encouraging in the sense that O-RFGP
learns a surrogate $\hat{f}_t$, 
and by running the control law \eqref{eq:tv_ctrl} with $\hat{f}_t$, the team of robots can be guided to locations that efficiently cover the domain of interest, i.e., a time-varying CVT}.

\section{Proof of Lemma \ref{lemma1}} \label{appendix_lemma1}

In this section, we prove that Lemma \ref{lemma1} holds when the RFGP approach computes $\mu$:
\begin{align*}
        \left| f({x}) - \mu_{t-1}({x}) \right| \leq \sqrt{\beta_t} \sigma_{t-1}({x})
\end{align*}
holds for any ${x}\in \mathcal{D}$ and any $t\geq 1$ with probability at least $1-\delta$ while picking $\delta \in (0,1]$, and set $\beta_t = 2\log\left( |\mathcal{D}| \frac{\pi_t}{\delta}\right)$, where $\sum_{t \geq 1}\pi_t^{-1}=1, \pi_t >0$.

\textit{Proof:}
Since we can have $f({x}) \sim \mathcal{N}(\mu_{t-1}({x}), \sigma^2_{t-1}({x}))$.
Let $r\sim \mathcal{N}(0,1)$ and $c_5>0$, we can have
\begin{align*}
    P(r>c) &= e^{\frac{-c_5^2}{2}} \frac{1}{\sqrt{2 \pi}} \int e^{-\frac{(r-c_5)^2}{2-(r-c_5)}}
    \leq e^{\frac{-c_5^2}{2}} P(r>0) = \frac{1}{2} e^{\frac{-c^2}{2}}.
\end{align*}
Let $r=\frac{f({x}) - \mu_{t-1}}{\sigma_{t-1}({x}) }$ and $c_5=\sqrt{\beta_t}$, then
\begin{align*}
    P( \left| f({x}) - \mu_{t-1} \right|  > \sqrt{\beta_t} \sigma_{t-1}({x})) \leq e^{-\frac{\beta_t}{2}}.
\end{align*}
Since the domain $\mathcal{D}$ is discrete per \textit{Assumption \ref{assump_2}}, applying the union bound,
\begin{align*}
    P\left( \left| f({x}) - \mu_{t-1} \right|  > \sqrt{\beta_t} \sigma_{t-1}({x}), ~ \forall {x}\in \mathcal{D} \right) \leq |\mathcal{D}| e^{-\frac{\beta_t}{2}},
\end{align*}
which can be rewritten as
\begin{align}
    P\left( \left| f({x}) - \mu_{t-1} \right|  \leq \sqrt{\beta_t} \sigma_{t-1}({x}), ~ \forall {x}\in \mathcal{D} \right) \geq 1 - |\mathcal{D}| e^{-\frac{\beta_t}{2}}.
\end{align}
Choosing $|D| e^{-\frac{\beta_t}{2}} = \frac{\delta}{\pi_t}$, we can have $\beta_t = 2 \log\left(|\mathcal{D}|\frac{\pi_t}{\delta} \right)$:
\begin{align*}
    P\left( \left| f({x}) - \mu_{t-1} \right|  \leq \sqrt{\beta_t} \sigma_{t-1}({x}), ~ \forall {x}\in \mathcal{D} \right) \geq 1- \delta.
\end{align*}

Proof of Lemma \ref{lemma1} is completed.

\section{Proof of Theorem \ref{theorem1}} \label{appendix_theorem1}
In the proof of Theorem \ref{theorem1}, we assume the high-probability event of Lemma \ref{lemma1} holds. Firstly, we define
\begin{align*}
    \ell^* &\triangleq \min_{\mathbf{x}} \ell(f, \mathbf{x}) \\
    \mathbf{x}^* &\triangleq \argmin_{\mathbf{x}}\ell(f, \mathbf{x})
\end{align*}
Based on \textit{Assumption \ref{assump_3}}, $\mathbf{x}_t = \argmin_{\mathbf{x}} \ell(\hat{f}_t, \mathbf{x})$, we have
\begin{align} \label{cost inequality}
    \ell(\hat{f}_t, \mathbf{x}_t) \leq \ell(\hat{f}_t, \mathbf{x}^*) \leq \ell^*
\end{align}
where the last inequality follows since the event in Lemma \ref{lemma1} holds:
\begin{align*}
     \mu_{t-1}({x}) &- \sqrt{\beta_t} \sigma_{t-1}({x}) \leq f({x}) \leq \mu_{t-1}({x}) + \sqrt{\beta_t}, \\
     \hat{f}_t({x}) & = \mu_{t-1}({x}) - \sqrt{\beta_t} \sigma_{t-1}({x}), \\
     \hat{f}_t({x}) &\leq f({x}) ~~~ \forall {x}\in \mathcal{D},\\
     \ell(\hat{f}_t, \mathbf{x}^*) &\leq \ell^*.
\end{align*}
Then, the regret at step $t$ is
\begin{align}
    r_t &\triangleq  \ell(f, \mathbf{x}_t) - \ell^* \stackrel{(a)}{\leq} \ell(f, \mathbf{x}_t) - \ell(\hat{f}_t, \mathbf{x}_t) \nonumber\\
    & \stackrel{(b)}{=} \sum_i \sum_{{q}\in \mathcal{V}^i_t} \lVert{q} - {x}^i_t\rVert_2^2 (f({q}) - \hat{f}_t({q})) \nonumber \\
    & \stackrel{(c)}{\leq} 2 \sqrt{\beta_t} \left( \sum_i \sum_{{q}\in \mathcal{V}^i_t} \lVert{q} - {x}^i_t\rVert_2^2 \right) \max_{{q}\in \mathcal{D}} \sigma_{t-1}({q}) \nonumber \\
    &\leq 2 c_0 \sqrt{\beta_t} \max_{{q}\in \mathcal{D}} \sigma_{t-1}({q}) \triangleq 2 c_0 \sqrt{\beta_t} \sigma_{t-1}(\Tilde{{x}}_t),
\end{align}
where $(a)$ follows from \eqref{cost inequality}, $(b)$ follows by the definition of locational cost, $(c)$ holds because $f({q}) \leq \mu_{t-1}({q}) + \sqrt{\beta_t} \sigma_{t-1}({q})$ and $\hat{f}_t({q}) = \mu_{t-1}({q}) - \sqrt{\beta_t} \sigma_{t-1}({q})$, and $c_0$ is defined as
\begin{align*}
    c_0 = |\mathcal{D}| \max_{{x},{x}' \in \mathcal{D}} \lVert {x}-{x}'\rVert_2^2 .
\end{align*}
Note that $c_0$ only depends on the domain $\mathcal{D}$ and not on the number of robots $n$.

Since $\beta_t$ is non-decreasing with respect to $t$, we have
\begin{align}
    (r_t)^2 &\leq 4 c_0^2 \beta_t \left( \sigma_{t-1}(\Tilde{{x}}_t)\right)^2 \\
    & \leq c_1 \beta_T \frac{1}{2} \log \left( 1+ \left( \frac{\sigma_{t-1}(\Tilde{{x}}_t)}{\sigma} \right)^2 \right),
\end{align}
by setting $c_1 = \frac{8 c_0^2}{\log(1+\sigma^{-2})}$. 
Then, we have
\begin{align} \label{upperbound sum rt^2}
    \sum_{t=1}^T (r_t)^2 \leq c_1 \beta_T \sum_{t=1}^T \log \left( 1+ \left( \frac{\sigma_{t-1}(\Tilde{{x}}_t)}{\sigma} \right)^2 \right).
\end{align}
According to \eqref{upperbound sum rt^2}, we will derive the information gain which upper bounds the last term, i.e., $$\sum_{t=1}^T \log \left( 1+ \left( \frac{\sigma_{t-1}(\Tilde{\mathbf{x}}_t)}{\sigma} \right)^2 \right)$$. For $t \in \{1,\cdots,T\}$, we define $f(\mathbf{x}_{:t}) \in \mathbb{R}_{nt}$ as
\begin{align*}
    f(\mathbf{x}_{:t}) \triangleq \left[ f(\mathbf{x}^1_1) \cdots f(\mathbf{x}^n_1) \cdots f(\mathbf{x}^1_t) \cdots f(\mathbf{x}^n_t)) \right]^\top.
\end{align*}
Recall that for every robot $i \in \{1,\cdots,n\}$ at each $t$, we have
\begin{align*}
    y^i_t = f({x}^i_t) +\epsilon,
\end{align*}
where all noises are $\mathcal{N}(0,\sigma^2)$ and i.i.d. Then,
\begin{align}
    I(f(\mathbf{x}_{:T}),f(\Tilde{\mathbf{x}}_{:T} ); \mathbf{y}_{:T}, \Tilde{\mathbf{y}}_{:T}) &= h(\mathbf{y}_{:T}, \Tilde{\mathbf{y}}_{:T})-h(\mathbf{y}_{:T}, \Tilde{\mathbf{y}}_{:T} | f(\mathbf{x}_{:T}),f(\Tilde{\mathbf{x}}_{:T} )) \nonumber\\
    &\stackrel{(a)}{=} h(\mathbf{y}_{:T}, \Tilde{\mathbf{y}}_{:T}) - \frac{(n+1)T}{2} \log(2\pi e \sigma^2),
\end{align}
where $(a)$ follows by
\begin{align*}
    h(\mathbf{y}_{:T}, \Tilde{\mathbf{y}}_{:T} | f(\mathbf{x}_{:T}),f(\Tilde{\mathbf{x}}_{:T} )) 
    &= - \sum_{y\in \mathbf{y}_{:T}, \Tilde{\mathbf{y}}_{:T}} \log p(y)\\
    & = - \sum_{y\in \mathbf{y}_{:T}, \Tilde{\mathbf{y}}_{:T}} \log \left[ \frac{1}{\sqrt{2\pi \sigma^2} e^{\frac{(y-f({x}))^2}{2\sigma^2}}}\right]\\
    & = - \sum_{y\in \mathbf{y}_{:T}, \Tilde{\mathbf{y}}_{:T}} \left( -\frac{1}{2}\log(2\pi \sigma^2)-0 \right)\\
    & = \frac{(n+1)T}{2} \log(2\pi \sigma^2).
\end{align*}
By the chain rule of differential entropy, we have
\begin{align}
    h(\mathbf{y}_{:T}, \Tilde{\mathbf{y}}_{:T}) 
    &= \sum_{t=1}^T h(\mathbf{y}_t, \Tilde{y}_t; \mathbf{y}_{:t-1}, \Tilde{\mathbf{y}}_{:t-1}) \nonumber \\
    & = \sum_{t=1}^T h(\mathbf{y}_t; \mathbf{y}_{:t-1}, \Tilde{\mathbf{y}}_{:t-1}) + h(\Tilde{y}_t; \mathbf{y}_{:t-1}, \Tilde{\mathbf{y}}_{:t-1})
\end{align}

According to ORC algorithm, the sequence of sensing points $\mathbf{x}_{:t}$ and $\Tilde{\mathbf{x}}_{:t}$ is a \textit{deterministic} function of $ \mathbf{y}_{:t-1}$ and $ \Tilde{\mathbf{y}}_{:t-1}$. 
Since $\Tilde{y}_t$ and $\mathbf{y}_t$ conditioned on $\mathbf{x}_{:t}, \Tilde{\mathbf{x}}_{:t}, \mathbf{y}_{:t-1}, \Tilde{\mathbf{y}}_{:t-1}$ that are distributed according to $\mathcal{N}(\mu_{t-1}(\Tilde{{x}}_t),\sigma^2+ \sigma_{t-1}(\Tilde{{x}}_t)^2)$ and $\mathcal{N}(\Tilde{\mathbf{\mu}}_t,K_t(\mathbf{x}_{t}))$ with $\Tilde{\mathbf{\mu}}_t \in \mathbb{R}^{n}$ and $K_t(\mathbf{x}_{t}) \in \mathbb{R}^{n\times n}$, we have
\begin{align}
    h(\Tilde{y}_t; \mathbf{y}_{:t-1}, \Tilde{\mathbf{y}}_{:t-1}) & = h(\Tilde{y}_t; \mathbf{x}_{:t}, \Tilde{\mathbf{x}}_{:t}, \mathbf{y}_{:t-1}, \Tilde{\mathbf{y}}_{:t-1}) \nonumber\\
    &= \frac{1}{2} \log \left( 2\pi e (\sigma^2 + \sigma_{t-1}(\Tilde{{x}}_t)^2) \right), \label{h_tilde_yt}\\
    h(\mathbf{y}_t; \mathbf{y}_{:t-1}, \Tilde{\mathbf{y}}_{:t-1}) &= h(\mathbf{y}_t; \mathbf{x}_{:t}, \Tilde{\mathbf{x}}_{:t}, \mathbf{y}_{:t-1}, \Tilde{\mathbf{y}}_{:t-1}) \nonumber\\
    &= \frac{1}{2} \log \left| 2\pi e (\sigma^2 \mathbf{I}+ K_t(\mathbf{x}_{t}) ) \right|. \label{h_yt} 
\end{align}
Combining \eqref{h_tilde_yt} and \eqref{h_yt} together, we obtain
\begin{align}
    I(f(\mathbf{x}_{:T}),f(\Tilde{\mathbf{x}}_{:T} ); \mathbf{y}_{:T}, \Tilde{\mathbf{y}}_{:T}) 
    &~~~ = \sum_{t-1}^T \left[ \frac{1}{2} \log \left| 2\pi e (\sigma^2 \mathbf{I}+\mathbf{\Sigma}_t) \right| + \frac{1}{2} \log \left( 2\pi e (\sigma^2 + \sigma^2_{t-1}(\Tilde{{x}}_t)) \right) \right] \nonumber\\
    &~~~ - \frac{(n+1)T}{2} \log(2\pi e \sigma^2), \nonumber\\
    &~~~ = \sum_{t-1}^T \left[ \frac{1}{2} \log\left| ( \mathbf{I}+\frac{1}{\sigma^2}\mathbf{K}_t) \right| + \frac{1}{2}\log\left( 1 + \frac{\sigma^2_{t-1}(\Tilde{\mathbf{x}}_t)}{\sigma^2} \right) \right] \nonumber\\
    &~~~ \geq \sum_{t-1}^T \frac{1}{2} \log  \left( 1+ \left( \frac{\sigma_{t-1}(\Tilde{{x}}_t)}{\sigma} \right)^2 \right).
    \label{eq:I bound rt2}
\end{align}
Since the exact behavior of $I(f(\mathbf{x}_{:T}),f(\Tilde{\mathbf{x}}_{:T} ); \mathbf{y}_{:T}, \Tilde{\mathbf{y}}_{:T})$ is difficult to analyze, we use the universal upper bound as \citet{srinivas2012information}

\begin{align}
    I(f(\mathbf{x}_{:T}),f(\Tilde{\mathbf{x}}_{:T} ); \mathbf{y}_{:T}, \Tilde{\mathbf{y}}_{:T})
    =& I (\Tilde{\mathbf{y}}_{:T}; f(\mathbf{x}_{:T}),f(\Tilde{\mathbf{x}}_{:T}) \nonumber \\
    &+ \sum_{i=1}^n I(\mathbf{y}^i_{:T}; f(\mathbf{x}_{:T}),f(\Tilde{\mathbf{x}}_{:T} ) | \mathbf{y}^1_{:T}, \cdots, \mathbf{y}^{i-1}_{:T}, \Tilde{\mathbf{y}}_{:T}) \nonumber\\
    \leq& I(\Tilde{\mathbf{y}}_{:T} ; f(\Tilde{\mathbf{x}}_{:T})) +\sum_{i=1}^n I(\mathbf{y}^i_{:T}; f(\mathbf{x}^i_{:T})) \nonumber \\
    \leq& (n+a) \max_{\mathcal{A} \subset \mathcal{D}: |\mathcal{A}|=T} I(\mathbf{y}_{\mathcal{A}};f_{\mathcal{A}}).
    \label{eq:universal bound}
\end{align}
Setting $\gamma_T \triangleq \max_{\mathcal{A} \subset \mathcal{D}: |\mathcal{A}|=T} I(\mathbf{y}_{\mathcal{A}};f_{\mathcal{A}})$, 
\begin{align}
    \gamma_T &=I(\mathbf{y}_{\mathcal{A}};f_{\mathcal{A}}) = \frac{1}{2} \log\left|  \mathbf{I}+\frac{1}{\sigma^2}\mathbf{K}_{\mathcal{A}} \right|,
\end{align}
and combining \eqref{upperbound sum rt^2}, \eqref{eq:I bound rt2} and \eqref{eq:universal bound}, we can obtain
\begin{align}
    \sum_{t=1}^T (r_t)^2 &\leq c_1 (n+1) \beta_T \gamma_T.
\end{align}
Per Cauchy-Schwarz inequality, 
\begin{align}
    R_t &= \sum_{t=1}^T r_t \leq \sqrt{c_1 (n+1)T \beta_T \gamma_T}.
\end{align}

\section{Information Gain $\gamma_T$ for RFGP} \label{inf_gain_RFGP}
Since $\mathbf{K}_{\mathcal{A}}$ is approximated as $\sigma^2_{\theta} \mathbf{\Phi}_T {\mathbf{\Phi}_T}^\top$ 
for RFGP, the $\gamma_T$ is 
\begin{align}
    \gamma_T  = \log \left|  \mathbf{I}+\frac{\sigma_{\theta}^2}{\sigma^2}  \mathbf{\Phi}_T {\mathbf{\Phi}_T}^\top \right|. \label{eq:RFGP_info_gain}
\end{align}
Based on the determinant of a square matrix, 
\begin{align}
    \left|  \mathbf{I}+\frac{1}{\sigma^2}  \mathbf{\Phi}_T {\mathbf{\Phi}_T}^\top \right| &\leq \left(\frac{\text{trace}(\mathbf{I}+ c_3  \mathbf{\Phi}_T {\mathbf{\Phi}_t}^\top)}{2D}\right)^{2D} \nonumber\\
    & = \left(\frac{\text{trace}(\mathbf{I})+ c_3 \text{trace}(\mathbf{\Phi}_T {\mathbf{\Phi}_T}^\top)}{2D}\right)^{2D}, \label{eq:det_2D}
\end{align}
where $c_3\triangleq \frac{\sigma_{\theta}^2}{\sigma^2}$, $\text{trace}(\mathbf{I}) = 2D$ and 
\begin{align*}
     \text{trace}(\mathbf{\Phi}_t {\mathbf{\Phi}_t}^\top) = \sum_{i=1}^{2D} k_{ii}&= \sum_{i=1}^{2D} \frac{1}{D}(\phi_{\mathbf{v}}^\top(\mathbf{x}_1))_i(\phi_{\mathbf{v}}(\mathbf{x}_1))_i + \cdots + \frac{1}{D} (\phi_{\mathbf{v}}^\top(\mathbf{x}_1))_T(\phi_{\mathbf{v}}(\mathbf{x}_1))_T \\
     &\leq \frac{1}{D} \sum_{i=1}^{2D} N T = \mathcal{O}(2T).
\end{align*}
By combining them, we can get
\begin{align}
    \gamma_T &\leq \log\left(\left(\frac{D+ c_3 T}{D}\right)^{2D}\right) = \mathcal{O}\left(2D\log\left(\frac{T}{D}\right) \right). \label{eq:gamma_T_RF}
\end{align}

Thus, the information gain,$\gamma_T$, scales sub-linearly for RFGP and O-RFGP.

\section{Convergence of O-RFGP} \label{converge_ORFGP}
Without noisy measurements, $\left| f({x}) - \mu_{t}({x}) \right| = \mathcal{O}(\frac{1}{\sqrt{T}})$ for GP \citep{srinivas2009gaussian} and RFGP \citep{rahimi2007random}. As for O-RFGP, we will additionally show that $ \left| f({x}) - \mu_{t}({x}) \right| = \mathcal{O}(\frac{\log T}{T})$ in this section. For simplicity and clarity of proof, we set $n=1$ then $\mathbf{x}_t$ can be rewritten as $x_t$.

\begin{lemma} \label{lemma:ORFGP}
For O-RFGP,
    \begin{align}
        \left| f({x}) - \mu_{t}({x}) \right| = \mathcal{O}(\frac{\log T}{T}).
    \end{align}
\end{lemma}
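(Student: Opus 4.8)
The plan is to recognize the O-RFGP recursions \eqref{eq:O-RFGP_theta}--\eqref{eq:O-RFGP_Sigma} as a recursive least-squares (equivalently, Kalman-filtering) update for the linear-in-features model $f(x) = \phi_{\mathbf{v}}^\top(x)\mathbf{\theta}^*$, so that with $n=1$ the pointwise error decomposes cleanly as $f(x) - \mu_t(x) = \phi_{\mathbf{v}}^\top(x)(\mathbf{\theta}^* - \mathbf{\theta}_t)$. Any residual gap between $f$ and its best $2D$-feature representation is the random-feature approximation error, which in the noiseless idealization I would treat as negligible relative to the estimation term being bounded. The whole argument then reduces to controlling the parameter error $\tilde{\mathbf{\theta}}_t \triangleq \mathbf{\theta}^* - \mathbf{\theta}_t$ and, in particular, its projection onto an arbitrary feature direction $\phi_{\mathbf{v}}(x)$.

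First I would substitute the noiseless measurement $y_t = \phi_{\mathbf{v}}^\top(\mathbf{x}_t)\mathbf{\theta}^*$ into \eqref{eq:O-RFGP_theta} to obtain a homogeneous recursion $\tilde{\mathbf{\theta}}_t = (\mathbf{I} - \sigma_t^{-2}(\mathbf{x}_t)\mathbf{\Sigma}_t\phi_{\mathbf{v}}(\mathbf{x}_t)\phi_{\mathbf{v}}^\top(\mathbf{x}_t))\tilde{\mathbf{\theta}}_{t-1}$; each step is a $\mathbf{\Sigma}$-weighted projection that annihilates the component of $\tilde{\mathbf{\theta}}$ along $\phi_{\mathbf{v}}(\mathbf{x}_t)$, so the natural Lyapunov function is the precision-weighted norm $V_t \triangleq \tilde{\mathbf{\theta}}_t^\top \mathbf{\Sigma}_t^{-1}\tilde{\mathbf{\theta}}_t$. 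Using the Sherman--Morrison identity to relate $\mathbf{\Sigma}_t^{-1}$ to $\mathbf{\Sigma}_{t-1}^{-1}$ (a rank-one update along $\phi_{\mathbf{v}}(\mathbf{x}_t)$), I would show that $V_t$ is monotonically non-increasing and quantify the decrement at step $t$ in terms of the realized predictive variance $\sigma_t^2(\mathbf{x}_t)$.

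Next, to convert this into an $x$-uniform bound, I would apply Cauchy--Schwarz in the $\mathbf{\Sigma}_t$ metric, $|f(x)-\mu_t(x)|^2 = |\phi_{\mathbf{v}}^\top(x)\tilde{\mathbf{\theta}}_t|^2 \le (\phi_{\mathbf{v}}^\top(x)\mathbf{\Sigma}_t\phi_{\mathbf{v}}(x))\,(\tilde{\mathbf{\theta}}_t^\top\mathbf{\Sigma}_t^{-1}\tilde{\mathbf{\theta}}_t) = k_t^{\mathrm{O}}(x,x)\,V_t$, which factors the error into the current predictive variance and the Lyapunov term. The variance is driven down by the greedy/UCB sampling $\tilde{x}_t = \argmax_{x}\sigma_{t-1}(x)$, and this is where $\log T$ enters: summing the per-step log-variance decrements telescopes into a log-determinant, i.e. the maximum information gain, for which Appendix \ref{inf_gain_RFGP} already supplies $\gamma_T = \mathcal{O}(2D\log(T/D))$. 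Distributing this logarithmic budget across $T$ steps by an averaging/pigeonhole argument forces the running value of $\max_x k_t^{\mathrm{O}}(x,x)$ down to $\mathcal{O}(\log T/T)$, and combined with the boundedness $\|\phi_{\mathbf{v}}(x)\|_2^2 = 1$ this yields the claimed rate.

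The hard part will be the middle step: establishing that the greedy variance-reduction schedule actually produces linear-in-$t$ growth of information (a persistent-excitation / domain-coverage condition on the sampled directions $\{\phi_{\mathbf{v}}(\mathbf{x}_s)\}$), so that the telescoped log-determinant of order $D\log(T/D)$ can legitimately be divided by $T$ to give the $1/T$ factor rather than only $1/\sqrt{T}$. A secondary subtlety is the noiseless limit of the recursion: the raw downdate in \eqref{eq:O-RFGP_Sigma} drives $\mathbf{\Sigma}_t$ toward singularity, so the Lyapunov argument must be carried out either with the regularization $\mathbf{I}_{2D}/\sigma_\theta^2$ retained in $\mathbf{\Sigma}_t^{-1}$ (as in the batch definition) or on the range space of the accumulated features, and the approximation error between $f$ and $\phi_{\mathbf{v}}^\top(\cdot)\mathbf{\theta}^*$ must be shown not to dominate the $\mathcal{O}(\log T/T)$ estimation term.
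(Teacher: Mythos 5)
Your route is genuinely different from the paper's. The paper never touches the recursion as a deterministic RLS/Kalman filter; instead it bounds the \emph{cumulative negative log-likelihood} of the online predictor via a KL-divergence (online Bayesian regret) argument in the style of \cite{lu2020ensemble} (Lemma \ref{lemma:D2}), transfers that bound to the best RKHS estimator using the Rahimi--Recht uniform kernel-approximation bound (Theorem \ref{theorem2}), concludes $\sum_{t}(f(x_t)-\mu_t(x_t))^2=\mathcal{O}(\log T)$ by dropping the loss of the optimal estimator, and then applies Cauchy--Schwarz to the cumulative absolute error. Your elliptical-potential argument is more elementary and avoids Assumptions \ref{assump_D1}--\ref{assump_D3} entirely; moreover, your pigeonhole step is actually cleaner than you fear: since $\mathbf{\Sigma}_t\preceq\mathbf{\Sigma}_{t-1}$, the sequence $\max_x\sigma_t^2(x)$ is non-increasing, and combining the telescoped log-determinant bound $\gamma_T=\mathcal{O}(D\log(T/D))$ of Appendix \ref{inf_gain_RFGP} with the greedy choice $\tilde{x}_t=\argmax_x\sigma_{t-1}(x)$ gives $T\max_x k_T^{\mathrm{O}}(x,x)\le\sum_t\max_x\sigma_{t-1}^2(x)=\mathcal{O}(\gamma_T)$ with no persistent-excitation assumption.

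The genuine gap is in your final step, and it is a missing square root. Your own factorization reads $|f(x)-\mu_t(x)|^2\le k_t^{\mathrm{O}}(x,x)\,V_t$; with $k_t^{\mathrm{O}}(x,x)=\mathcal{O}(\log T/T)$ and $V_t=\mathcal{O}(1)$ (which is all the Lyapunov monotonicity gives you without persistent excitation), this yields $|f(x)-\mu_t(x)|=\mathcal{O}(\sqrt{\log T/T})$, not $\mathcal{O}(\log T/T)$. To recover the stated rate you would need $V_t$ itself to decay like $\log T/T$, i.e.\ $\lambda_{\min}$ of the accumulated feature Gram matrix growing linearly in $t$ in the directions spanned by $\mathbf{\theta}^*$ --- exactly the unproven excitation condition you flag as the hard part. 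You should be aware, however, that the paper's own proof lands in the same place: its last display explicitly concludes $|f(x)-\mu_t(x)|=\mathcal{O}(\sqrt{\log T/T})$ and concedes that this is slower than $\mathcal{O}(1/\sqrt{T})$, so the rate printed in the lemma statement is not what either argument actually establishes; both proofs only support the weaker $\mathcal{O}(\sqrt{\log T/T})$ conclusion, which is what is actually needed downstream in Lemma \ref{lemma1}. A secondary point you correctly identify but should not wave away: because \eqref{eq:O-RFGP_var} defines $\sigma_t^2(x_t)=\phi_{\mathbf{v}}^\top(x_t)\mathbf{\Sigma}_{t-1}\phi_{\mathbf{v}}(x_t)$ with no additive $\sigma^2$, the downdate \eqref{eq:O-RFGP_Sigma} annihilates $\mathbf{\Sigma}_t$ along each sampled direction in one step, so $\mathbf{\Sigma}_t^{-1}$ ceases to exist and the potential $V_t$ must be defined on the range space or with the prior regularization $\mathbf{I}_{2D}/\sigma_\theta^2$ restored; this is a fixable but non-optional repair.
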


To prove Lemma \ref{lemma:ORFGP}, we define the negative log-likelihood loss and regret for prediction 
\begin{align}
    \mathcal{L}(\mu({x}; \mathbf{\theta}); f({x})) &\triangleq -\log p(f({x}) | \mu({x}; \mathbf{\theta})) \\
    R_{\text{pred}}(T) &\triangleq \sum_{t=1}^T\mathcal{L}(\mu({x}_t; \mathbf{\theta}); f({x}_t)) - \sum_{t=1}^T \mathcal{L}(\mu({x}; \mathbf{\theta}^*); f({x}_t))
\end{align}
where $\mathbf{\theta}^*$ denotes the optimal solution and $\mu({x}; \mathbf{\theta})$ is computed as \eqref{eq:O-RFGP}. We omitted the superscript `O' here for clarity and readability.
Then, we need to introduce additional assumptions of \citet{lu2020ensemble}:
\begin{assumption}
    The loss, $\mathcal{L}$, is continuously twice differentiable with $|\frac{d^2}{d z_t^2} \mathcal{L}(z_t;f({x}_t)| \leq c_4$. \label{assump_D1}
\end{assumption}
\begin{assumption}
    The loss, $\mathcal{L}$, is convex and has bounded derivative $|\frac{d}{d z_t} \mathcal{L}(z_t;f({x}_t))| \leq L$ \label{assump_D2}
\end{assumption}
\begin{assumption}
    The kernel, $\Bar{k}$, is shift-invariant, standardized and bounded as $\Bar{k}({x}, {x}') \leq 1, \forall {x}, {x}'$ \label{assump_D3}
\end{assumption}

To establish the regret bound for O-RFGP, we also need the following lemma.
\begin{lemma} \label{lemma:D2}
    Under Assumptions \ref{assump_D1}, for a fixed $\epsilon_2>0$, the following bound holds with probability at least $1- 2^8 (\frac{\sigma_*}{\epsilon_2})^2 \exp(\frac{-D \epsilon_2^2}{4d+8})$
    \begin{align}
        \sum_{t=1}^T \mathcal{L} &\leq \sum_{t=1}^T \mathcal{L}(\phi_{\mathbf{v}}({x}_t)\mathbf{\theta}^*;f({x}_t))  + \frac{\lVert\mathbf{\theta}^*\rVert^2}{2 \sigma_{\theta}} + D \log(1+ \frac{T \sigma_{\theta}^2}{2D}).
    \end{align}
\end{lemma}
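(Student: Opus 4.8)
The plan is to read the O-RFGP recursions \eqref{eq:O-RFGP_theta}--\eqref{eq:O-RFGP_Sigma} as a second-order online learner in the $2D$-dimensional random-feature space — an online Newton / sequential ridge-regression scheme — and then apply the standard regret template of \cite{lu2020ensemble} for such schemes. First I would note, via the Sherman--Morrison identity, that \eqref{eq:O-RFGP_Sigma} is the inverse of a curvature matrix $\mathbf{A}_t$ that accumulates the rank-one feature outer products $\phi_{\mathbf{v}}(\mathbf{x}_s)\phi_{\mathbf{v}}^\top(\mathbf{x}_s)$ on top of the prior precision $\mathbf{\Sigma}_0^{-1}$, and that \eqref{eq:O-RFGP_theta} is the matching iterate; a deterministic fact used throughout is $\lVert\phi_{\mathbf{v}}(x)\rVert_2=1$, which will later control the trace.

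Next I would carry out the per-step analysis. Writing $\ell_t(\mathbf{\theta})\triangleq\mathcal{L}(\phi_{\mathbf{v}}^\top(x_t)\mathbf{\theta};f(x_t))$ and $\nabla_t\triangleq\nabla\ell_t(\mathbf{\theta}_{t-1})$, the twice-differentiability with bounded second derivative of Assumption \ref{assump_D1}, combined with the convexity and bounded-gradient Assumption \ref{assump_D2}, yields an exp-concavity-type inequality $\ell_t(\mathbf{\theta}_{t-1})-\ell_t(\mathbf{\theta}^*)\le\nabla_t^\top(\mathbf{\theta}_{t-1}-\mathbf{\theta}^*)-\tfrac{c}{2}\bigl(\nabla_t^\top(\mathbf{\theta}_{t-1}-\mathbf{\theta}^*)\bigr)^2$ for a suitable $c$. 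Introducing the potential $\Phi_t\triangleq\tfrac12(\mathbf{\theta}_t-\mathbf{\theta}^*)^\top\mathbf{A}_t(\mathbf{\theta}_t-\mathbf{\theta}^*)$, substituting the update \eqref{eq:O-RFGP_theta}, and completing the square, I would bound each per-step gap by $\Phi_{t-1}-\Phi_t+\tfrac12\,\nabla_t^\top\mathbf{A}_t^{-1}\nabla_t$.

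Summing over $t$ telescopes the potential, leaving $\Phi_0=\tfrac12\mathbf{\theta}^{*\top}\mathbf{\Sigma}_0^{-1}\mathbf{\theta}^*$, which reduces under the prior initialization of $\mathbf{\Sigma}_0$ to the regularization term $\tfrac{\lVert\mathbf{\theta}^*\rVert^2}{2\sigma_\theta}$ of the claim. The residual $\sum_t\nabla_t^\top\mathbf{A}_t^{-1}\nabla_t$ I would control by the elliptical-potential / matrix-determinant lemma, bounding it by $\log\bigl(|\mathbf{A}_T|/|\mathbf{A}_0|\bigr)=\log\bigl|\mathbf{I}_{2D}+c\textstyle\sum_t\phi_{\mathbf{v}}(x_t)\phi_{\mathbf{v}}^\top(x_t)\bigr|$ for the appropriate constant $c$. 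Applying the arithmetic--geometric-mean inequality on the $2D$ eigenvalues together with $\mathrm{trace}\bigl(\sum_t\phi_{\mathbf{v}}(x_t)\phi_{\mathbf{v}}^\top(x_t)\bigr)\le T$ — precisely the manipulation already carried out in \eqref{eq:det_2D} — collapses this log-determinant into $D\log\bigl(1+\tfrac{T\sigma_\theta^2}{2D}\bigr)$, finishing the deterministic part. The high-probability factor $1-2^8(\sigma_*/\epsilon_2)^2\exp\!\bigl(-\tfrac{D\epsilon_2^2}{4d+8}\bigr)$ enters through the Rahimi--Recht uniform approximation guarantee for random Fourier features \cite{rahimi2007random}: under the shift-invariant, standardized, bounded kernel of Assumption \ref{assump_D3}, the $2D$-feature map reproduces the kernel to within $\epsilon_2$ over the whole domain, which legitimizes comparing the online iterates against the random-feature comparator $\mathbf{\theta}^*$.

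The main obstacle I anticipate is the per-step potential argument: since the O-RFGP step is weighted by the data-dependent predictive variance $\sigma_t^{-2}(x_t)$ rather than a fixed learning rate, I must verify that this weighting makes \eqref{eq:O-RFGP_theta} the exact curvature-matched Newton step for $\mathbf{A}_t$, and that the quadratic remainder from Assumption \ref{assump_D1} is uniformly absorbed with the correct constant. Threading the bookkeeping so that the telescoped comparator is exactly $\lVert\mathbf{\theta}^*\rVert^2/(2\sigma_\theta)$ and the feature dimension enters as $2D$ (yielding the $1/(2D)$ inside the logarithm) is the delicate point; the remaining estimates are routine once the telescoping and the determinant--trace inequality \eqref{eq:det_2D} are in place.
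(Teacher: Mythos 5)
Your proposal arrives at a bound of the right form, but by a genuinely different route from the paper. The paper runs no potential-function or online-Newton analysis: it observes that $\sum_{t=1}^T\mathcal{L}$ is the prequential Bayesian log-loss, which by the chain rule equals the negative log marginal likelihood $-\log p(f(\mathbf{x}_{:t});\mathbf{x}_{:t})$, and then invokes the variational inequality (``Lemma 2'' of \cite{lu2020ensemble}) $\sum_{t=1}^T\mathcal{L}\le\Bar{\mathcal{L}}_{q_{\mathbf{\theta}}}+KL(q(\mathbf{\theta})\,\|\,p(\mathbf{\theta}))$ valid for \emph{any} $q$. Choosing $q=\mathcal{N}(\mathbf{\theta}^*,\xi^2\mathbf{I}_{2D})$ against the prior $p=\mathcal{N}(\mathbf{0},\sigma_{\theta}^2\mathbf{I}_{2D})$, Taylor-expanding the loss around $z_t^*=\phi_{\mathbf{v}}^\top({x}_t)\mathbf{\theta}^*$ with Assumption \ref{assump_D1} to get $\Bar{\mathcal{L}}_{q_{\mathbf{\theta}}}\le\mathcal{L}_{\mathbf{\theta}^*}+Tc_4\xi^2/2$, and optimizing over $\xi$ yields the regularization and $D\log(1+Tc_4\sigma_{\theta}^2/(2D))$ terms in one stroke --- no telescoping, no elliptical-potential lemma, and no determinant--trace step (the manipulation \eqref{eq:det_2D} you cite lives in Appendix \ref{inf_gain_RFGP} and bounds $\gamma_T$, not this lemma). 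Your approach is more ``algorithmic,'' tied to the actual recursion \eqref{eq:O-RFGP_theta}--\eqref{eq:O-RFGP_Sigma}; the paper's mixture argument is agnostic to the update rule and uses only that the predictor is the Bayesian posterior predictive.

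Two points need care if you pursue your route. First, the quantity $\sum_{t=1}^T\mathcal{L}$ being bounded is the predictive log-loss $-\sum_t\log p(f({x}_t)\mid f(\mathbf{x}_{:t-1}))$, which carries the predictive-variance terms $\tfrac{1}{2}\log(2\pi(\cdot))$; the KL argument bounds exactly this, whereas a potential-function analysis of \eqref{eq:O-RFGP_theta} naturally bounds the plug-in loss of the point iterate $\mathbf{\theta}_{t-1}$, so the variance contribution must be accounted for separately. Second, your exp-concavity step and the bound $\sum_t\nabla_t^\top\mathbf{A}_t^{-1}\nabla_t\le L^2\log(|\mathbf{A}_T|/|\mathbf{A}_0|)$ import Assumption \ref{assump_D2}, which the lemma does not list, and leave a gradient-bound prefactor that has to be reconciled with the paper's $c_4$ sitting \emph{inside} the logarithm; likewise your telescoped $\Phi_0$ gives $\lVert\mathbf{\theta}^*\rVert^2/(2\sigma_{\theta}^2)$ rather than the stated $\lVert\mathbf{\theta}^*\rVert^2/(2\sigma_{\theta})$ (the paper's own KL computation in fact also produces the $\sigma_{\theta}^2$ scaling, so this is a typo you would match rather than miss). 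Your reading of the probability factor is consistent with the paper's: it originates from the Rahimi--Recht bound \eqref{eq:I,rahimi} and plays no role in the derivation of this lemma, only in Theorem \ref{theorem2}.
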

\begin{proof}
Defining the cumulative loss over T slots with a fixed $\mathbf{\theta}$ as
\begin{align}
    \mathcal{L}_{\mathbf{\theta}} \triangleq -\log (f(\mathbf{x}_{:T})|\mathbf{\theta};\mathbf{x}_{:T}) = \sum_{t=1}^T \mathcal{L}(\phi_{\mathbf{v}}^\top ({x}_t))\mathbf{\theta};f({x}_t)),
\end{align}
the expected cumulative loss over $q(\mathbf{\theta})$, a pdf of the fixed strategy $\mathbf{\theta}$, can be defined as
\begin{align}
    \Bar{ \mathcal{L}}_{q_{\mathbf{\theta}}} \triangleq \mathbb{E}_q [\mathcal{L}_{\mathbf{\theta}}] = \int_\mathbf{\theta} q(\mathbf{\theta}) \mathcal{L}_{\mathbf{\theta}} d\mathbf{\theta} . 
\end{align}
Based on Bayes rule, we have
\begin{align}
    \sum_{t=1}^T   \mathcal{L}(\mu({x}; \mathbf{\theta}); f({x})) &= \sum_{t=1}^T -\log (f({x}_t)| f({x}_{t-1}); \mathbf{x}_{:t}) \nonumber \\
    &= -\log p(f(\mathbf{x}_{:t});\mathbf{x}_{:t}).
\end{align}
Employing the definition of the Kullback-Leibler (KL) divergence, we have
\begin{align}
\label{eq:AppD_KL}
    \sum_{t=1}^T   \mathcal{L}(\mu({x}; \mathbf{\theta}); f({x})) - \Bar{ \mathcal{L}}_{q_{\mathbf{\theta}}}  = \int_\mathbf{\theta} q(\mathbf{\theta}) \log \frac{p(f(\mathbf{x}_{:t})| \mathbf{\theta};\mathbf{x}_{:t})}{p(f(\mathbf{x}_{:t});\mathbf{x}_{:t})} d\mathbf{\theta} .
\end{align}
Since $p(f(\mathbf{x}_{:t}), \mathbf{\theta};\mathbf{x}_{:t}) = p(f(\mathbf{x}_{:t})| \mathbf{\theta};\mathbf{x}_{:t}) p (\mathbf{\theta}) = p(f(\mathbf{x}_{:t}); \mathbf{x}_{:t}) p(\mathbf{\theta}|f(\mathbf{x}_{:t});\mathbf{x}_{:t} )$, the RHS of \eqref{eq:AppD_KL} can be rewritten as 
\begin{align*}
    \int_\mathbf{\theta} q(\mathbf{\theta}) \log \frac{p(f(\mathbf{x}_{:t})| \mathbf{\theta};\mathbf{x}_{:t})}{p(f(\mathbf{x}_{:t});\mathbf{x}_{:t})} d\mathbf{\theta}
    &= \int_\mathbf{\theta} q(\mathbf{\theta}) \log \frac{p(\mathbf{\theta}|f(\mathbf{x}_{:t});\mathbf{x}_{:t} )}{p(\mathbf{\theta})} d\mathbf{\theta}\\
    &= \int_\mathbf{\theta} q(\mathbf{\theta}) \log\frac{q(\mathbf{\theta})}{p(\mathbf{\theta})} d\mathbf{\theta} 
    - \int_\mathbf{\theta} q(\mathbf{\theta}) \log\frac{q(\mathbf{\theta})}{p(\mathbf{\theta}|f(\mathbf{x}_{:t});\mathbf{x}_{:t} ) } d\mathbf{\theta}\\
    &\leq KL(q(\mathbf{\theta}) || p(\mathbf{\theta})).
\end{align*}
Hence, it holds for any $q(\mathbf{\theta})$ (\textit{Lemma 2} of \citet{lu2020ensemble}) that
\begin{align}
    \sum_{t=1}^T   \mathcal{L}(\mu({x}; \mathbf{\theta}); f({x}))\leq \Bar{ \mathcal{L}}_{q_{\mathbf{\theta}}}  + KL(q(\mathbf{\theta}) || p(\mathbf{\theta})). \label{lemma2_lud}
\end{align}
Let $q(\mathbf{\theta}) = \mathcal{N}(\mathbf{\theta};\mathbf{\theta}^*, \xi^2 \mathbf{I}_{2D})$ and $p(\mathbf{\theta})= \mathcal{N}(\mathbf{\theta};\mathbf{0}, \sigma_{\theta}^2 \mathbf{I}_{2D})$, combined with \eqref{lemma2_lud}, we can have
\begin{align}
    \sum_{t=1}^T   \mathcal{L}(\mu({x}; \mathbf{\theta}); f({x})) - \Bar{ \mathcal{L}}_{q_{\mathbf{\theta}}} &\leq KL(q(\mathbf{\theta}) || p(\mathbf{\theta})) \nonumber\\
    &~ = 2D \log \sigma_{\theta} + \frac{1}{2\sigma_{\theta}} (\lVert\mathbf{\theta}^*\rVert^2 + 2D \xi^2) -2D -2D\log\xi.\label{eq:L-Lq}
\end{align}
Let $z_t = \phi_{\mathbf{v}}^\top({x}_{t}) \mathbf{\theta}$ and $z_t^* = \phi_{\mathbf{v}}^\top({x}_{t}) \mathbf{\theta}^*$. Taking Taylor's expansion of $\mathcal{L}(z_t; f({x}))$ around $z_t^*$ results in
\begin{align}
    \mathcal{L}(z_t; f({x})) &= \mathcal{L}(z^*_t; f(f{x})) + \frac{d \mathcal{L}(z^*_t; f({x})) }{d z_t} (z_t - z^*_t) + \frac{d^2}{d z_t^2 } \mathcal{L}(h(z_t); f({x})) \frac{ (z_t - z^*_t)^2}{2}, \label{eq:Taylor,z_t}
\end{align}
where $h(z_t)$ is some function lying between $z_t$ and $z^*_t$. Taking the expectation of \eqref{eq:Taylor,z_t} with respect to $q(\mathbf{\theta})$ leads to
\begin{align}
    \mathbb{E}_q &= \mathcal{L}(z^*_t; f({x})) + \frac{d \mathcal{L}(z^*_t; f({x})) }{d z_t} \times 0 \nonumber+ \mathbb{E}_q \left[ \frac{d^2}{d z_t^2 } \mathcal{L}(h(z_t); f({x})) \frac{ (z_t - z^*_t)^2}{2}\right] \nonumber\\
    &\stackrel{(a)}{\leq} \mathcal{L}(z^*_t; f({x})) + c_4 \mathbb{E}\left[ \frac{ (z_t - z^*_t)^2}{2}\right] \nonumber\\
    & \stackrel{(b)}{\leq}  \mathcal{L}(\phi_{\mathbf{v}}^\top({x}_{t}) \mathbf{\theta}; f({x})) + \frac{c_4 \xi^2}{2}, \label{eq:Eq}
\end{align}
where $(a)$ makes use of \textit{Assumption \ref{assump_D1}}, and $(b)$ relies on the bound $\lVert \phi_{\mathbf{v}}({x}_{t}) \rVert^2 \leq 1$.
Summing \eqref{eq:Eq} from $t=1$ to $T$, we have
\begin{align}
    \Bar{ \mathcal{L}}_{q_{\mathbf{\theta}}} \leq \mathcal{L}_{\mathbf{\theta}^*} + \frac{T c_4 \xi^2}{2}.
\end{align}
Combining with \eqref{eq:L-Lq}, the following inequality holds
\begin{align}
    &\sum_{t=1}^T   \mathcal{L}(\mu({x}; \mathbf{\theta}); f({x})) \leq \mathcal{L}_{\mathbf{\theta}^*} + \frac{T c_4 \xi^2}{2} + 2D \log \sigma_{\theta} + \frac{1}{2\sigma_{\theta}} (\lVert\mathbf{\theta}^*\rVert^2 + 2D \xi^2) -2D -2D\log\xi. \label{eq:with xi^2}
\end{align}
Since the RHS is a convex function of $\xi$ with minimal value at $\xi^2 =\frac{2D \sigma_{\theta}^2}{2D+T c_4 \sigma_{\theta}^2}$, replacing $\xi$ with the minimal value in \eqref{eq:with xi^2} leads to
\begin{align*}
    \sum_{t=1}^T   \mathcal{L}(\mu({x}; \mathbf{\theta}); f({x})) &\leq \sum_{t=1}^T \mathcal{L}(\phi_{\mathbf{v}}({x}_t)\mathbf{\theta}^*;f({x}_t))+ \frac{1}{2\sigma_{\theta}} \lVert\mathbf{\theta}^*\rVert^2 + D \log\left(1 +\frac{T c_4 \sigma_{\theta}^2}{2D}\right).
\end{align*}
Proof of Lemma \ref{lemma:D2} is completed.
\end{proof}

Lemma \ref{lemma:D2} bounds the cumulative Bayesian loss. Next, we will further bound the loss of O-RFGP estimator relative to the best function estimator in the original RKHS.

\begin{theorem} \label{theorem2}
    Under \textit{Assumptions \ref{assump_D1}, \ref{assump_D2} and \ref{assump_D3}}, and with optimal function estimator ($\hat{f}^*$) belonging to the RHKS $\mathcal{H}^*$ induced by $k^*$,
    for a fixed $\epsilon_2>0$,  the following bound holds with probability at least $1- 2^8 (\frac{\sigma_*}{\epsilon_2})^2 \exp(\frac{-D \epsilon_2^2}{4d+8})$
    \begin{align}
       \sum_{t=1}^T   \mathcal{L}(\mu({x}; \mathbf{\theta}); f({x})) - \sum_{t=1}^T \mathcal{L}(\hat{f}^*({x}_t);f({x}_t))  \leq 
       \frac{(1+\epsilon_2)^2}{2\sigma_{\theta^2}} + D \log\left(1 +\frac{T c_4 \sigma_{\theta}^2}{2D}\right) + \epsilon_2 LTC,
    \end{align}
    where $C$ is a constant, $d=2$ in our case.
    Let $\epsilon_2=\mathcal{O}(\frac{\log T}{T})$, $R_{\text{pred}}(T) = \mathcal{O}(\log T)$.
\end{theorem}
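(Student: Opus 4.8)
The plan is to build directly on Lemma~\ref{lemma:D2}, which already controls the cumulative O-RFGP loss against the \emph{best fixed parameter} $\mathbf{\theta}^*$ in the random-feature space, and to spend the remaining effort bridging from that random-feature optimum to the optimal estimator $\hat{f}^*$ living in the original RKHS $\mathcal{H}^*$. Concretely, Lemma~\ref{lemma:D2} supplies
\begin{align*}
    \sum_{t=1}^T \mathcal{L}(\mu({x}_t;\mathbf{\theta});f({x}_t)) \leq \sum_{t=1}^T \mathcal{L}(\phi_{\mathbf{v}}^\top({x}_t)\mathbf{\theta}^*;f({x}_t)) + \frac{\lVert\mathbf{\theta}^*\rVert^2}{2\sigma_{\theta}} + D\log\Big(1+\tfrac{Tc_4\sigma_{\theta}^2}{2D}\Big),
\end{align*}
so the only two quantities still tied to the random-feature representation are the comparator loss $\sum_t \mathcal{L}(\phi_{\mathbf{v}}^\top({x}_t)\mathbf{\theta}^*;f({x}_t))$ and the penalty $\lVert\mathbf{\theta}^*\rVert^2/(2\sigma_\theta)$; both must be re-expressed in terms of $\hat{f}^*$.

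The bridge is the uniform random-feature approximation guarantee of \cite{rahimi2007random}, following the argument of \cite{lu2020ensemble}. I would invoke it to assert that, for the shift-invariant standardized kernel of Assumption~\ref{assump_D3}, with probability at least $1-2^8(\sigma_*/\epsilon_2)^2\exp(-D\epsilon_2^2/(4d+8))$ the random map $\phi_{\mathbf{v}}$ approximates $\hat{f}^*$ uniformly, i.e. there exists a parameter realizing $\sup_{{x}\in\mathcal{D}}|\phi_{\mathbf{v}}^\top({x})\mathbf{\theta}^*-\hat{f}^*({x})|\le \epsilon_2 C$ while keeping $\lVert\mathbf{\theta}^*\rVert\le 1+\epsilon_2$ under the normalization in which the RKHS comparator has unit norm. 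This is precisely where the stated failure probability enters the theorem, with $d=2$ giving $4d+8=16$.

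With the uniform bound in hand the two remaining terms follow mechanically. For the comparator loss I would use Assumption~\ref{assump_D2}: since $\mathcal{L}$ is convex with first derivative bounded by $L$, it is $L$-Lipschitz in its first argument, so term by term
\begin{align*}
    \mathcal{L}(\phi_{\mathbf{v}}^\top({x}_t)\mathbf{\theta}^*;f({x}_t)) - \mathcal{L}(\hat{f}^*({x}_t);f({x}_t)) \le L\,\big|\phi_{\mathbf{v}}^\top({x}_t)\mathbf{\theta}^*-\hat{f}^*({x}_t)\big| \le \epsilon_2 L C,
\end{align*}
and summing over the $T$ slots yields the $\epsilon_2 LTC$ term; the norm bound $\lVert\mathbf{\theta}^*\rVert\le 1+\epsilon_2$ turns the penalty into $(1+\epsilon_2)^2/(2\sigma_\theta^2)$. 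Substituting both into Lemma~\ref{lemma:D2} gives the claimed inequality. The asymptotic statement then follows by taking $\epsilon_2=\mathcal{O}(\log T/T)$: the penalty is $\mathcal{O}(1)$, the log-determinant term is $\mathcal{O}(\log T)$ because $D$ is fixed, and $\epsilon_2 LTC=\mathcal{O}(\log T)$, so $R_{\text{pred}}(T)=\mathcal{O}(\log T)$.

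I expect the main obstacle to be the approximation step rather than the algebra: one must produce a \emph{single} $\mathbf{\theta}^*$ that simultaneously approximates $\hat{f}^*$ uniformly over the sampled points and has norm controlled by $1+\epsilon_2$, and then argue the failure probability $2^8(\sigma_*/\epsilon_2)^2\exp(-D\epsilon_2^2/(4d+8))$ remains meaningful as $\epsilon_2\to 0$. The latter is delicate, since shrinking $\epsilon_2=\mathcal{O}(\log T/T)$ inflates the prefactor $(\sigma_*/\epsilon_2)^2$ while the exponent $D\epsilon_2^2$ vanishes for fixed $D$; an honest asymptotic therefore requires the number of features $D$ to scale with $T$ (or $\epsilon_2$ to be read as a fixed target accuracy), a caveat I would state explicitly alongside the $\mathcal{O}(\log T)$ regret.
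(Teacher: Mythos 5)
Your proposal follows essentially the same route as the paper's own proof: start from Lemma~\ref{lemma:D2}, invoke the Rahimi--Recht uniform kernel-approximation bound (which is the source of the stated failure probability), use the representer expansion of $\hat{f}^*$ together with the Lipschitz property from Assumption~\ref{assump_D2} to convert the $\epsilon_2$ kernel error into the $\epsilon_2 LTC$ loss gap, and bound $\lVert\mathbf{\theta}^*\rVert^2$ via the same uniform bound to control the penalty term (the paper obtains $(1+\epsilon_2)C^2$ there, so the $(1+\epsilon_2)^2$ in the theorem statement appears to be a typo that your normalization happens to reproduce). Your closing caveat --- that taking $\epsilon_2=\mathcal{O}(\log T/T)$ with fixed $D$ drives the failure probability $2^8(\sigma_*/\epsilon_2)^2\exp(-D\epsilon_2^2/(4d+8))$ toward one, so the asymptotic claim implicitly requires $D$ to grow with $T$ --- is a legitimate gap in the paper's own argument that it does not address.
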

\begin{proof}
By \citet{rahimi2007random}, for a given shift-invariant standardized kernel $\Bar{k}$, the maximum point-wise error of the RF kernel approximate is uniformly bounded with probability at least $1- 2^8 (\frac{\sigma_*}{\epsilon_2})^2 \exp(\frac{-D \epsilon_2^2}{4d+8})$,
\begin{align}
    \sup_{{x}, {x}'}  \left| \phi_{\mathbf{v}}^\top({x}) \phi_{\mathbf{v}}({x}') - \Bar{k}({x}, {x}') \right| \leq \epsilon_2, \label{eq:I,rahimi}
\end{align}
where $\epsilon_2$ is a given constant, $D$ is the number of the feature vectors, $d$ is dimension of ${x}$, and $\sigma_*^2 \triangleq \mathbb{E}\left[ \lVert \mathbf{v}\rVert^2 \right]$ is the second-order moment of the RF vector $\mathbf{v}$.

The optimal function estimator in $\mathcal{H}^*$ incurred by $k^*$ is expressed as $\hat{f}({x}) \triangleq \sum_{t=1}^T \hat{\alpha}_t k^*({x},{x}_t) = \sigma_{\theta}^2 \sum_{t=1}^T \hat{\alpha}_t \Bar{k}({x},{x}_t)$; and its RF-based approximate is $\check{f}^*({x}) \triangleq \phi_{\mathbf{v}}^\top({x})\theta^*$ with $\mathbf{\theta}^* \triangleq \sigma_{\theta}^2 \sum_{t=1}^T \hat{\alpha}_t \phi_{\mathbf{v}}({x})$. Then, we can have
\begin{align}
    &\left| \sum_{t=1}^T\mathcal{L}(\check{f}^*({x}_t);f({x}_t)) - \sum_{t=1}^T\mathcal{L}(\hat{f}^*({x}_t); f({x}_t)) \right| \nonumber\\
    \stackrel{(a)}{\leq}& \sum_{t=1}^T \left| \mathcal{L}(\check{f}^*({x}_t);f({x}_t)) - \mathcal{L}(\hat{f}^*({x}_t); f({x}_t)) \right| \nonumber\\
    \stackrel{(b)}{\leq}& \sum_{t=1}^T L \sigma_{\theta}^2 \left| \sum_{t'=1}^T \hat{\alpha}_{t'} \phi_{\mathbf{v}}^\top({x}_t) \phi_{\mathbf{v}}({x}_{t'}) - \sum_{t'=1}^T \hat{\alpha}_{t'} \Bar{k}({x}_t,{x}_{t'})  \right| \nonumber\\
    \stackrel{(c)}{\leq}& \sum_{t=1}^T L \sigma_{\theta}^2 \sum_{t'=1}^T |\hat{\alpha}_{t'}| \left| \phi_{\mathbf{v}}^\top({x}_t) \phi_{\mathbf{v}}({x}_{t'}) - \Bar{k}({x}_t,{x}_{t'}) \right|, \label{eq:II}
\end{align}
where $(a)$ follows the triangle inequality, $(b)$ makes use of \textit{Assumption \ref{assump_D2}}, and $(c)$ results from the Cauchy-Schwarz inequality. Combining \eqref{eq:II} with \eqref{eq:I,rahimi}, we find that
\begin{align}
   \left| \sum_{t=1}^T\mathcal{L}(\check{f}^*({x}_t);f({x}_t)) - \sum_{t=1}^T\mathcal{L}(\hat{f}^*({x}_t); f({x}_t)) \right| 
   \leq \sum_{t=1}^T L \sigma_{\theta}^2 \epsilon_2 \sum_{t'=1}^T|\hat{\alpha}_{t'}| \leq \epsilon_2 LTC, 
\end{align}
where $C\triangleq \sum_{t=1}^T \sigma_{\theta}^2 |\hat{\alpha}_t|$. Thus, we can have
\begin{align}
    \left| \sum_{t=1}^T\mathcal{L}(\phi_{\mathbf{v}}^\top({x}_t) \mathbf{\theta}^*;f({x}_t)) - \sum_{t=1}^T\mathcal{L}(\hat{f}^*({x}_t); f({x}_t)) \right| \leq \epsilon_2 LTC. \label{eq:III}
\end{align}
The uniform convergence bound \eqref{eq:I,rahimi} and \textit{Assumption \ref{assump_D3}} imply that
\begin{align}
    \sup_{{x}, {x}'}   \phi_{\mathbf{v}}^\top({x}) \phi_{\mathbf{v}}({x}')   \leq \epsilon_2 +1,
\end{align}
which leads to 
\begin{align}
    \lVert \mathbf{\theta}^* \rVert^2 &\triangleq \left\lVert \sigma_{\theta}^2 \sum_{t=1}^T \hat{\alpha}_t \phi_{\mathbf{v}}({x}_t) \right\rVert^2 \nonumber\\
    &= \sigma_{\theta}^4 \sum_{t=1}^T \sum_{t'=1}^T \hat{\alpha}_t \hat{\alpha}_{t'} \phi^\top_{\mathbf{v}}({x}_t)\phi_{\mathbf{v}}({x}_{t'}) \nonumber\\
    &\leq (1+\epsilon_2) C^2. \label{eq:IV}
\end{align}
Combing \eqref{eq:III}, \eqref{eq:IV} and Lemma \ref{lemma:D2}, we can have
\begin{align}
    \sum_{t=1}^T   \mathcal{L}(\mu({x}; \mathbf{\theta}); f({x})) - \sum_{t=1}^T \mathcal{L}(\hat{f}^*({x}_t);f({x}_t))  
    \leq \frac{(1+\epsilon_2)C^2}{2\sigma_{\theta}^2} + D\log\left(1+\frac{T c_4 \sigma_{\theta}^2 }{2D }\right) + \epsilon_2 LTC.
\end{align}
Proof of Theorem \ref{theorem2} is completed.
\end{proof}

Theorem \ref{theorem2} provides the upper bound of prediction for O-RFGP: $R_{\text{pred}}(T) = \mathcal{O}(\log T)$ while $\mathcal{L}$ is Bayesian loss. Assuming the loss of optimal estimator $\hat{f}^*$ is relatively negligible,  $\mathcal{L}(\hat{f}^*({x}_t);f({x}_t)) \approx0$, we can have $\sum_{t=1}^T   \mathcal{L}(\mu({x}; \mathbf{\theta}); f({x})) = \mathcal{O}(\log T)$. Then, $\mathcal{L}(\mu_t({x}); f({x}))$ can be rewritten as
\begin{align*}
    \mathcal{L}(\mu_t({x}_t); f({x}_t)) &= -\log p(f({x}_t; \mathbf{\theta}) | \mu_t({x}_t)) \\
    &= \frac{(f({x}_t) - \mu_t({x}_t))^2}{2 \sigma_t^2} + \frac{1}{2} \log (2 \pi \sigma_t^2).
\end{align*}
Since $\sigma_t^2$ is bounded due to the bounded kernel, the cumulative squared error also scales as $\mathcal{O}(\log T)$:  
by Cauchy-Schwarz inequality,  we can have
\begin{align*}
    \sum_{t=1}^T |f({x}_t) - \mu_t({x}_t)| &\leq \sqrt{T \sum_{t=1}^T (f({x}_t) - \mu_t({x}_t))^2}\\
    & = \mathcal{O}(\sqrt{T \log T}).
\end{align*}
Since the cumulative absolute error is $\mathcal{O}(\sqrt{T \log T})$, the per-step error is 
\begin{align*}
    \left| f({x}) - \mu_{t}({x}) \right| = \mathcal{O}(\sqrt{\frac{\log T}{T}}).
\end{align*}
Although it converges slower than $\mathcal{O}({\frac{1}{\sqrt{T}}})$, it is still fast enough for the requirement in Lemma \ref{lemma1}.

\section{Experimental Settings and Implementation Details}
\label{sec:experimentalsettings}

The domain of interest $\mathcal{D}$ is a rectangular domain with the origin located at the center of it, $x$-axis ranging from $-1.6$ to $1.6$ meters, and $y$-axis ranging from $-1$ to $1$ meter. The common parameters for the simulations are set as follows. The initial positions of the team of robots are ${x}^1_0 = [-0.8,-0.2]^\top$, ${x}^2_0 = [-0.8,0.2]^\top$, ${x}^3_0 = [-0.4,-0.2]^\top$, ${x}^4_0 = [-0.4,0.2]^\top$, 
${x}^5_0 = [0,-0.2]^\top$, ${x}^6_0 = [0,0.2]^\top$, ${x}^7_0 = [0.4,-0.2]^\top$, ${x}^8_0 = [0.4,0.2]^\top$, ${x}^9_0 = [0.8,-0.2]^\top$, and ${x}^{10}_0 = [0.8,0.2]^\top$. The initialization of the learned density is a uniform distribution across $\mathcal{D}$. Each robot takes a noisy measurement of the density value at its current position at each time step. The parameters of RFGP for learning the mean and variance of the density function are $\tau_{\text{RF}} = 0.1$, $\sigma_{\theta}^2 = 5$. The non-decreasing sequence $\sqrt{\beta_t}$ is set as  $10^{-3}$ for RFGP and $10^{5}$ for O-RFGP. The noise $\epsilon$ is sampled from a zero-mean Gaussian distribution with $\sigma = 0.01$. The number of random features are $D = 20$ and $D = 40$ for RFGP and O-RFGP, respectively. 

The time-invariant density function is defined as
$
f({x}) = \sum_{i=1}^6 A_i \exp (-({x} - \textbf{m}_i)^\top S_i ({x} - \textbf{m}_i))$,
where the amplitudes are $A_1 = A_2 = A_3 = A_4 = 150$, $A_5 = A_6 = 50$, the means are $\textbf{m}_1 = [1.3,-0.75]^\top$, $\textbf{m}_2 = [-1.2,0.7]^\top$, $\textbf{m}_3 = \textbf{m}_6 = [1,0.8]^\top$, $\textbf{m}_4 = \textbf{m}_5 = [-1,-0.8]^\top$, and the covariance matrices are $S_1 = S_2 = \text{diag}(0.3^2,0.8^2)$, $S_3 = \text{diag}(0.8^2,0.4^2)$, $S_4 = \text{diag}(0.4^2,0.8^2)$, $S_5 = S_6 = \text{diag}(0.85^2,0.2^2)$. 

The time-varying density function is defined as $
f_t({x}) = \! \sum_{i=1}^6 A_i(t) \cdot \exp (-({x} - \textbf{m}_i)^{\!\top} \! S_i ({x} - \textbf{m}_i)$,
where the amplitudes are $A_1 = A_2 = A_3 = A_4 = 150$, $A_5(t) = 50 (1+\sin(0.12t))$, $A_6(t) = 50 (1-\sin(0.12t))$, the means are $\textbf{m}_1 = [1.3,-0.75]^\top$, $\textbf{m}_2 = [-1.2,0.7]^\top$, $\textbf{m}_3 = \textbf{m}_6 = [1,0.8]^\top$, $\textbf{m}_4 = \textbf{m}_5 = [-1,-0.8]^\top$, and the covariance matrices are $S_1 = S_2 = \text{diag}(0.3^2,0.8^2)$, $S_3 = \text{diag}(0.8^2,0.4^2)$, $S_4 = \text{diag}(0.4^2,0.8^2)$, $S_5 = S_6 = \text{diag}(0.85^2,0.2^2)$. 

\section{Additional Experiments}
\label{sec:additionalexp}
Apart from Section~\ref{sec:simulations}, we perform several additional simulations for: 1. sensitivity study of the number of random features $D$ and non-decreasing sequence $\beta_t$, and 2. comparison of the computational efficiency among GP, RFGP, and O-RFGP.


\subsection{Sensitivity Study} \label{sec:sensitive}

\subsubsection{Dimension of random features $D$}

\begin{figure}[htbp]
\centering
\subfigure[]{
\begin{minipage}[b]{0.35\textwidth}
\includegraphics[width=1\textwidth]{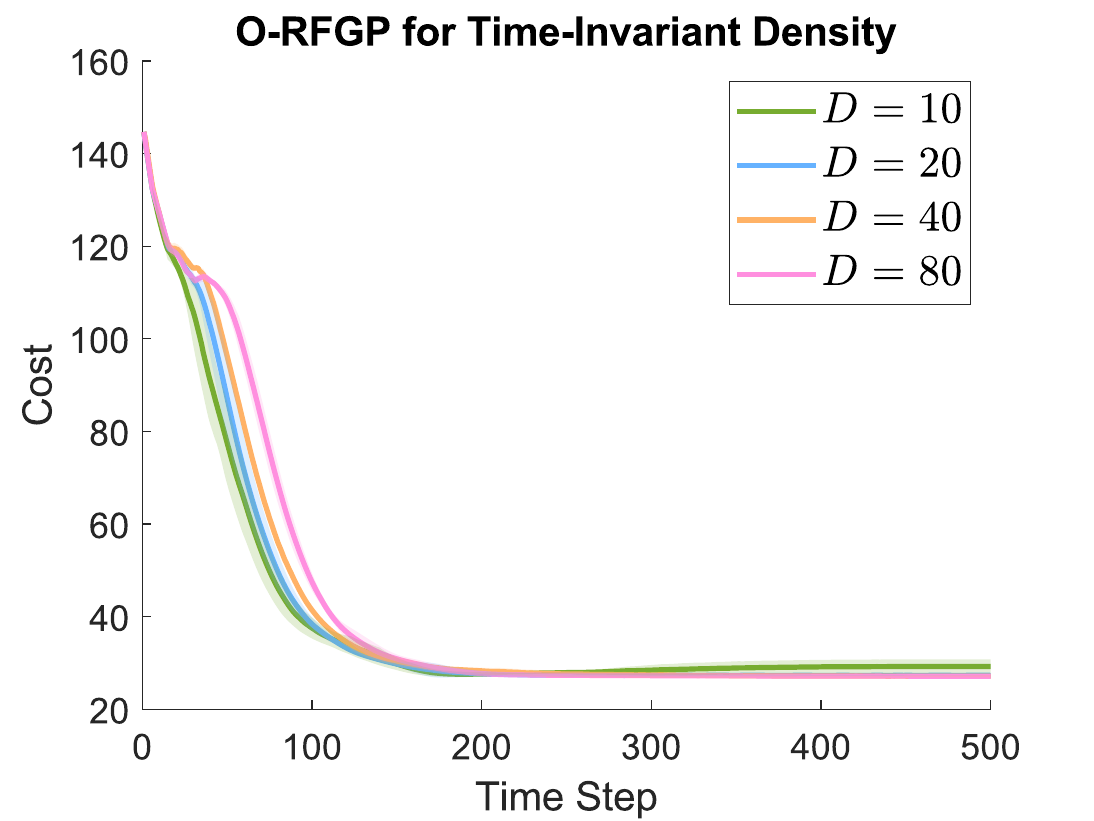}
\end{minipage}
\label{Sensitivity:D,TI,Cost}
}
\subfigure[]{
\begin{minipage}[b]{0.35\textwidth}
\includegraphics[width=1\textwidth]{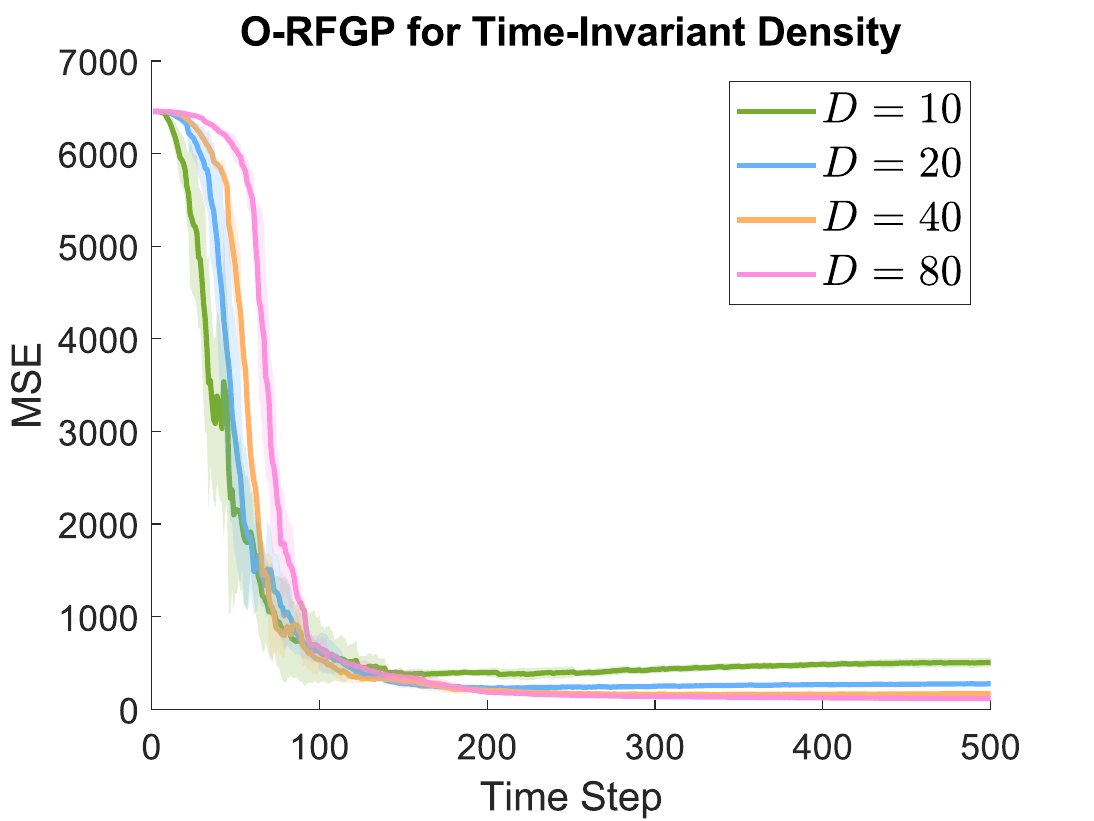}
\end{minipage}
\label{Sensitivity:D,TI,MSE}
}
\caption{Sensitivity study of $D$ when using O-RFGP to estimate the time-invariant density.
}
\label{Sensitivity:D,TI}
\end{figure}

\begin{table}[htbp]
\centering
\caption{Cost when using O-RFGP with different values of $D$ to estimate the time-invariant density. Associated with Fig.~\ref{Sensitivity:D,TI,Cost}.}
\begin{tabular}{lcccc}
\toprule
\multicolumn{1}{c}{$D$} & \textbf{$T=50$} & \textbf{$T=100$} & \textbf{$T=200$} & \textbf{$T=500$} \\ 
\midrule
\textbf{$10$} & $77.61\pm 8.92$ & $37.57 \pm 2.25$ & $ 27.63 \pm 0.39$ & $29.25 \pm 1.52$ \\
\textbf{$20$} & $87.28 \pm 8.77$ & $38.43 \pm 1.46$ & $27.72 \pm 0.62$ & $27.34 \pm 0.02$ \\
\textbf{$40$} & $96.35 \pm 2.46$ & $41.47 \pm 0.53$ & $28.32 \pm 0.68$ & $27.15 \pm 0.22$ \\
\textbf{$80$} & $108.18 \pm 1.7754$ & $47.57 \pm 1.88$ & $27.74 \pm 0.25$ & $27.14 \pm 0.04$ \\
\bottomrule
\end{tabular}
\label{table:Sensitivity:D,TI,Cost}
\end{table}

\begin{table}[htbp]
\centering
\caption{MSE ($\times 10^3$) when using O-RFGP with different values of $D$ to estimate the time-invariant density.. Associated with Fig.~\ref{Sensitivity:D,TI,MSE}.}
\begin{tabular}{lcccc}
\toprule
\multicolumn{1}{c}{$D$} & \textbf{$T=50$} & \textbf{$T=100$} & \textbf{$T=200$} & \textbf{$T=500$} \\ 
\midrule
\textbf{$10$} 
& $2.15 \pm 0.91$
& $0.65 \pm 0.37$ 
& $0.40 \pm 0.04$ 
& $0.51 \pm 0.05$ \\
\textbf{$20$} 
& $2.90 \pm 1.24$ 
& $0.62 \pm 0.19$ 
& $0.23 \pm 0.01$ 
& $0.28 \pm 0.26$ \\
\textbf{$40$} 
& $4.78 \pm 0.53$ 
& $0.54 \pm 0.13$ 
& $0.20 \pm 0.04$ 
& $0.17 \pm 0.03$ \\
\textbf{$80$} 
& $6.05 \pm 0.15$ 
& $0.66 \pm 0.09$ 
& $0.19 \pm 0.02$ 
& $0.12 \pm 0.01$ \\
\bottomrule
\end{tabular}
\label{table:Sensitivity:D,TI,MSE}
\end{table}

\begin{figure}[htbp]
\centering
\subfigure[]{
\begin{minipage}[b]{0.35\textwidth}
\includegraphics[width=1\textwidth]{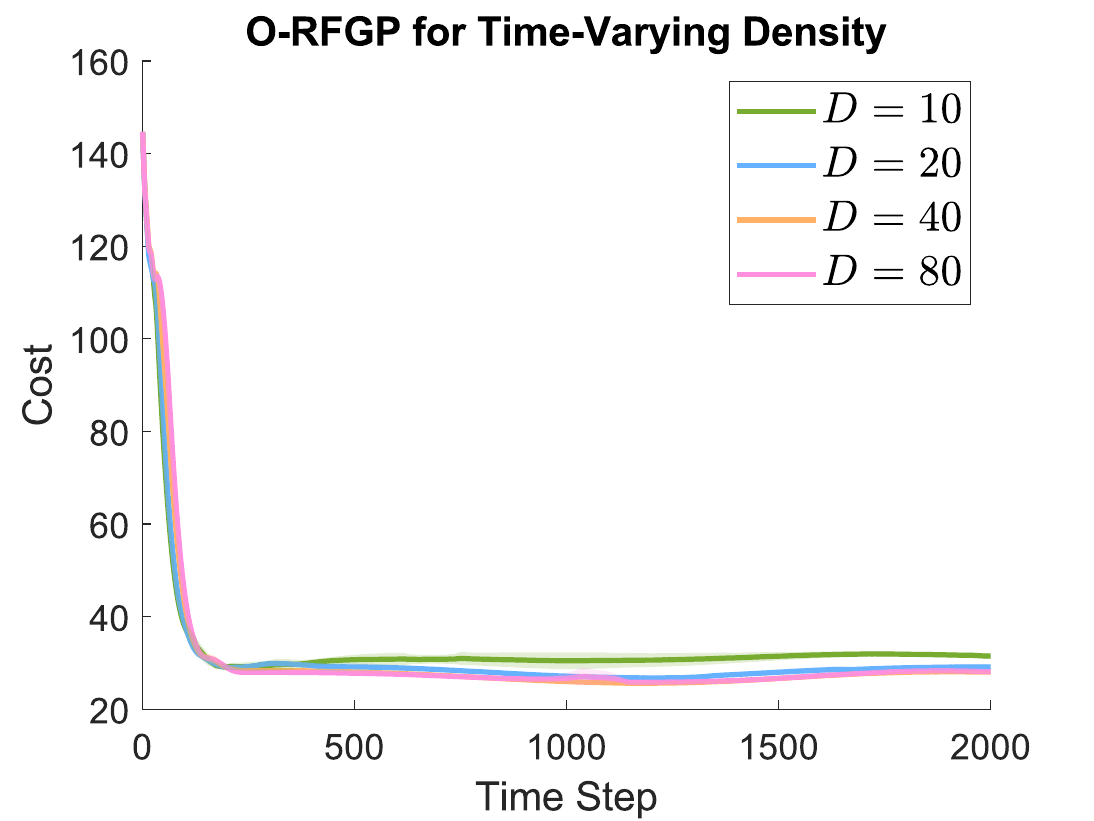}
\end{minipage}
\label{Sensitivity:D,TV,Cost}
}
\subfigure[]{
\begin{minipage}[b]{0.35\textwidth}
\includegraphics[width=1\textwidth]{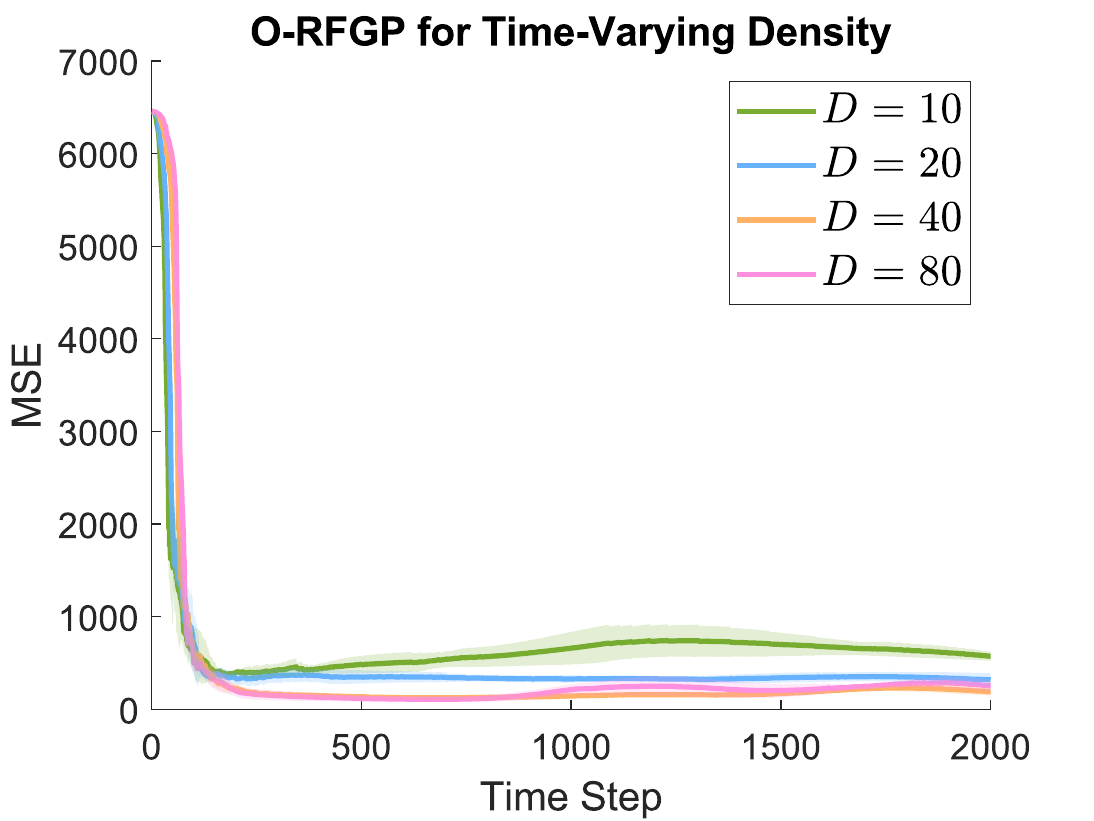}
\end{minipage}
\label{Sensitivity:D,TV,MSE}
}
\caption{Sensitivity study of $D$ when using O-RFGP to estimate the time-varying density.
}
\label{Sensitivity:D,TV}
\end{figure}

\begin{table}[htbp]
\centering
\caption{Cost when using O-RFGP with different values of $D$ to estimate the time-varying density. Associated with Fig.~\ref{Sensitivity:D,TV,Cost}.}
\begin{tabular}{lcccccc}
\toprule
\multicolumn{1}{c}{$D$} 
& \textbf{$T=50$} 
& \textbf{$T=100$} 
& \textbf{$T=500$}
& \textbf{$T=1000$}
& \textbf{$T=1500$}
& \textbf{$T=2000$}
\\ 
\midrule
\textbf{$10$} 
& $77.71 \pm 5.06$ 
& $37.95 \pm 1.50$  
& $30.72 \pm 1.06$
& $30.49 \pm 1.81$ 
& $31.48 \pm 0.88$ 
& $31.49 \pm 0.63$ 
\\
\textbf{$20$} 
& $82.42 \pm 1.35$ 
& $38.45 \pm 1.56$ 
& $29.18 \pm 0.80$ 
& $27.18 \pm 0.44$ 
& $28.02 \pm 0.61$ 
& $29.15 \pm 0.64$ 
\\
\textbf{$40$} 
& $97.14 \pm 5.16$ 
& $42.79 \pm 1.22$  
& $28.09 \pm 0.13$ 
& $26.00 \pm 0.21$ 
& $26.68 \pm 0.21$ 
& $28.04 \pm 0.27$ 
\\
\textbf{$80$} 
& $105.52 \pm 2.26$ 
& $45.11 \pm 1.09$  
& $27.81 \pm 0.23$ 
& $26.71 \pm 1.15$ 
& $26.70 \pm 0.24$ 
& $28.12 \pm 0.07$ 
\\
\bottomrule
\end{tabular}
\label{table:Sensitivity:D,TV,Cost}
\end{table}

\begin{table}[htbp]
\centering
\caption{MSE ($\times 10^3$) when using O-RFGP with different values of $D$ to estimate the time-varying density. Associated with Fig.~\ref{Sensitivity:D,TV,MSE}.}
\begin{tabular}{lcccccc}
\toprule
\multicolumn{1}{c}{$D$} 
& \textbf{$T=50$} 
& \textbf{$T=100$} 
& \textbf{$T=500$} 
& \textbf{$T=1000$}
& \textbf{$T=1500$} 
& \textbf{$T=2000$} 
\\ 
\midrule
\textbf{$10$} 
& $1.60 \pm 0.68$ 
& $0.64 \pm 0.20$ 
& $0.49 \pm 0.09$
& $0.66 \pm 0.18$ 
& $0.70 \pm 0.13$ 
& $0.57 \pm 0.05$ 
\\
\textbf{$20$} 
& $2.06 \pm 0.09$ 
& $0.71 \pm 0.31$ 
& $0.35 \pm 0.07$ 
& $0.33 \pm 0.04$ 
& $0.34 \pm 0.05$ 
& $0.32 \pm 0.07$ 
\\
\textbf{$40$} 
& $5.22 \pm 0.88$ 
& $0.57 \pm 0.07$ 
& $0.14 \pm 0.02$ 
& $0.14 \pm 0.03$ 
& $0.17 \pm 0.01$ 
& $0.19 \pm 0.04$ 
\\
\textbf{$80$} 
& $5.93 \pm 0.17$ 
& $0.56 \pm 0.12$ 
& $0.19 \pm 0.03$
& $0.21 \pm 0.06$ 
& $0.20 \pm 0.07$ 
& $0.25 \pm 0.06$ 
\\
\bottomrule
\end{tabular}
\label{table:Sensitivity:D,TV,MSE}
\end{table}

To analyze the influence of the number of random features $D$, the results in Fig.~\ref{Sensitivity:D,TI}, Table~\ref{table:Sensitivity:D,TI,Cost}, Table~\ref{table:Sensitivity:D,TI,MSE}, Fig.~\ref{Sensitivity:D,TV}, Table~\ref{table:Sensitivity:D,TV,Cost} and Table~\ref{table:Sensitivity:D,TV,MSE} show that large values of $D$ lead to accurate estimation of both time-invariant and time-varying densities, which in turn improves the overall coverage quality. 
Although the theoretical analysis holds only in the time-invariant setting, it suggests that similar principles also govern the performance of O-RFGP in the time-varying setting. In particular, as $D$ increases, the MSE decreases, consistent with the analysis presented in Appendix \ref{converge_ORFGP}.

\subsubsection{Non-decreasing Sequence $\beta_t$}

\begin{figure}[htbp]
\centering
\subfigure[]{
\begin{minipage}[b]{0.35\textwidth}
\includegraphics[width=1\textwidth]{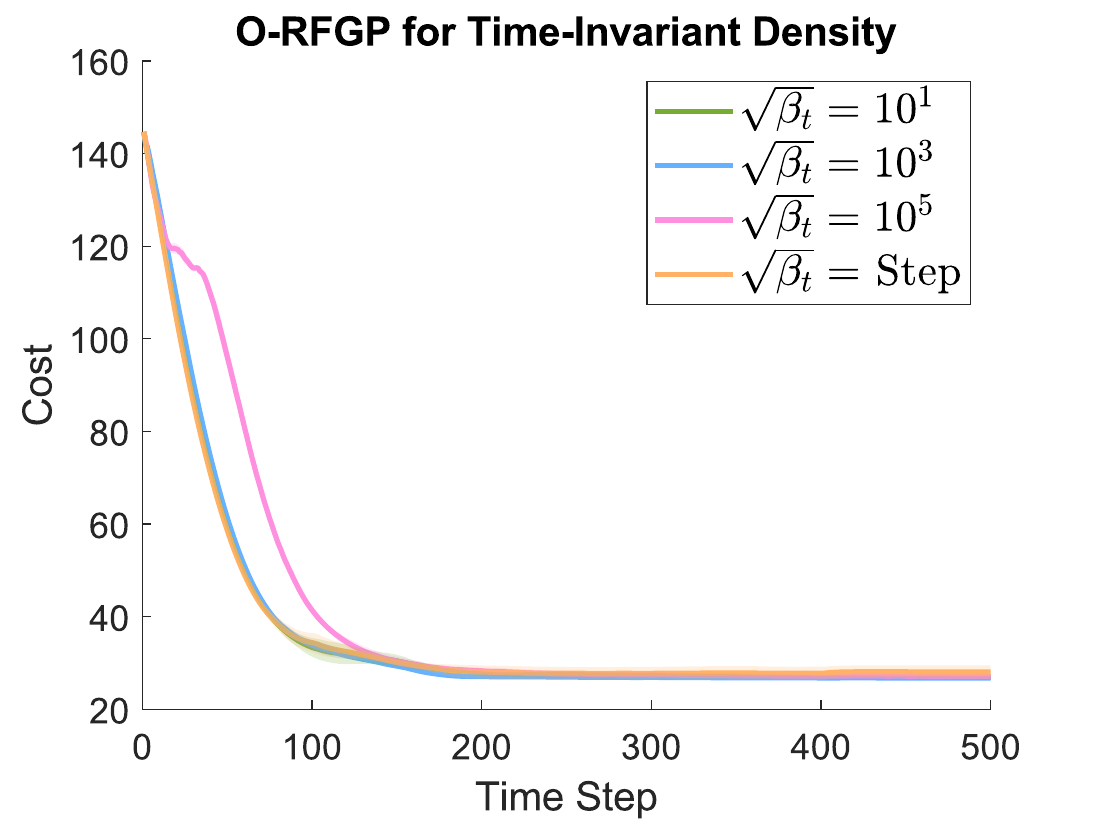}
\end{minipage}
\label{Sensitivity:beta,TI,Cost}
}
\subfigure[]{
\begin{minipage}[b]{0.35\textwidth}
\includegraphics[width=1\textwidth]{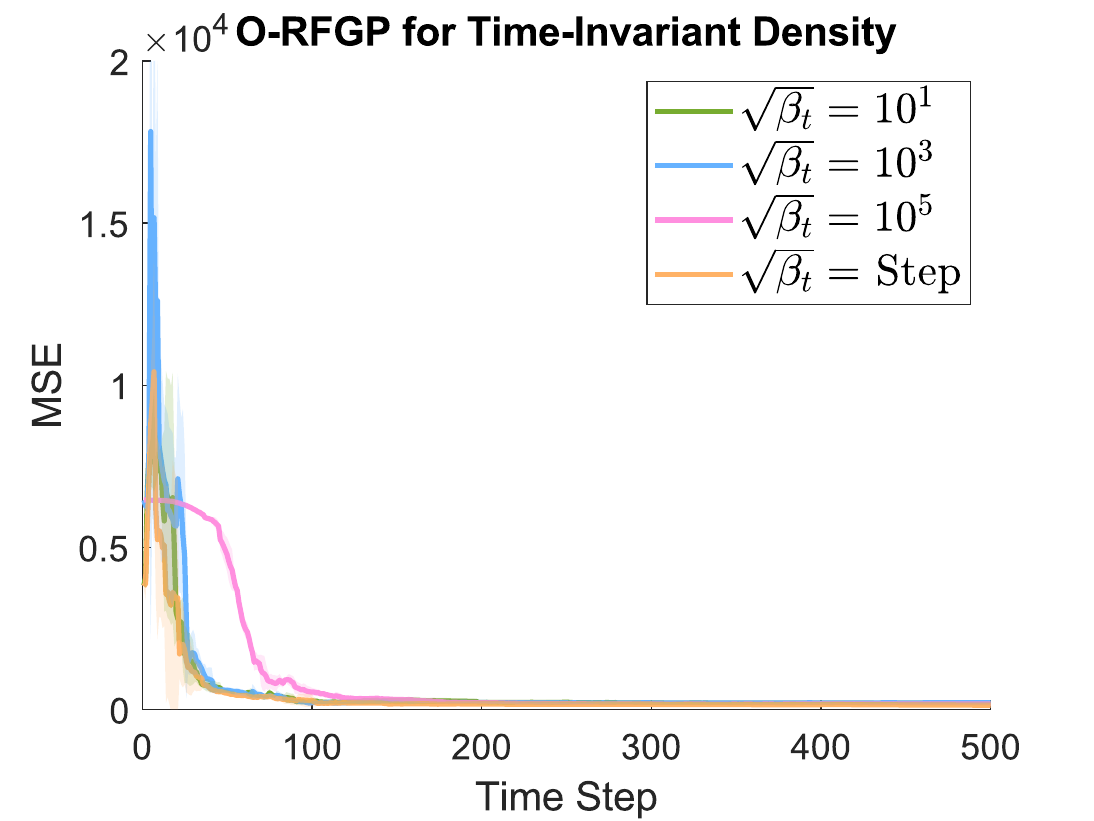}
\end{minipage}
\label{Sensitivity:beta,TI,MSE}
}
\caption{Sensitivity study of $\sqrt{\beta_t}$ when using O-RFGP to estimate the time-invariant density. ``$\sqrt{\beta_t}$ = Step'' denotes a piecewise constant schedule: $\sqrt{\beta_t} = 10^1$ for time steps $0$-$100$, $\sqrt{\beta_t} = 10^2$ for time steps $100$-$200$, $\sqrt{\beta_t} = 10^3$ for time steps $200$-$300$, $\sqrt{\beta_t} = 10^4$ for time steps $300$-$400$, and $\sqrt{\beta_t} = 10^5$ for time steps $400$-$500$.
}
\label{Sensitivity:beta,TI}
\vspace{-0.2cm}
\end{figure}

\begin{table}[htbp]
\vspace{-0.2cm}
\centering
\caption{Cost when using O-RFGP with different values of $D$ to estimate the time-invariant density. Associated with Fig.~\ref{Sensitivity:beta,TI,Cost}.}
\begin{tabular}{lcccc}
\toprule
\multicolumn{1}{c}{$\sqrt{\beta_t}$} & \textbf{$T=50$} & \textbf{$T=100$} & \textbf{$T=200$} & \textbf{$T=500$} \\ 
\midrule
\textbf{$10^1$} 
& $60.10 \pm 1.10$ 
& $33.41 \pm 2.00$ 
& $27.83 \pm 0.36$ 
& $26.92 \pm 0.21$ \\
\textbf{$ 10^3$}
& $61.53 \pm 0.53$ 
& $33.88 \pm  0.96$ 
& $27.09 \pm 0.53$ 
& $26.78 \pm  0.28$ \\
\textbf{$ 10^5$} 
& $96.35 \pm 2.46$ 
& $41.47 \pm 0.53$ 
& $28.32 \pm 0.68$ 
& $27.15 \pm 0.22$ \\
\textbf{$\mathrm{Step}$} 
& $58.81 \pm 0.11$ 
& $34.40 \pm 2.07$ 
& $28.07 \pm 1.42$ 
& $28.05 \pm 1.43$ \\
\bottomrule
\end{tabular}
\label{table:Sensitivity:beta,TI,Cost}
\end{table}

\begin{table}[htbp]
\centering
\caption{MSE ($\times 10^3$) when using O-RFGP with different values of $D$ to estimate the time-invariant density. Associated with Fig.~\ref{Sensitivity:beta,TI,MSE}.}
\begin{tabular}{lcccc}
\toprule
\multicolumn{1}{c}{$\sqrt{\beta_t}$} & \textbf{$T=50$} & \textbf{$T=100$} & \textbf{$T=200$} & \textbf{$T=500$} \\ 
\midrule
\textbf{$10^1$} 
& $0.56 \pm 0.06$ 
& $0.24 \pm 0.08$ 
& $0.21 \pm 0.07$ 
& $0.19 \pm 0.05$ \\
\textbf{$10^3$} 
& $0.59 \pm 0.08$ 
& $0.21 \pm 0.04$ 
& $0.21 \pm 0.01$ 
& $0.20 \pm 0.01$ \\
\textbf{$10^5$} 
& $4.76 \pm 0.53$ 
& $0.54 \pm 0.13$ 
& $0.20 \pm 0.04$ 
& $0.17 \pm 0.03$ \\
\textbf{$\mathrm{Step}$} 
& $0.50 \pm 0.04$ 
& $0.28 \pm 0.19$ 
& $0.17 \pm 0.07$ 
& $0.13 \pm 0.03$ \\
\bottomrule
\end{tabular}
\label{table:Sensitivity:beta,TI,MSE}
\end{table}

\begin{figure}[htbp]
\centering
\subfigure[]{
\begin{minipage}[b]{0.35\textwidth}
\includegraphics[width=1\textwidth]{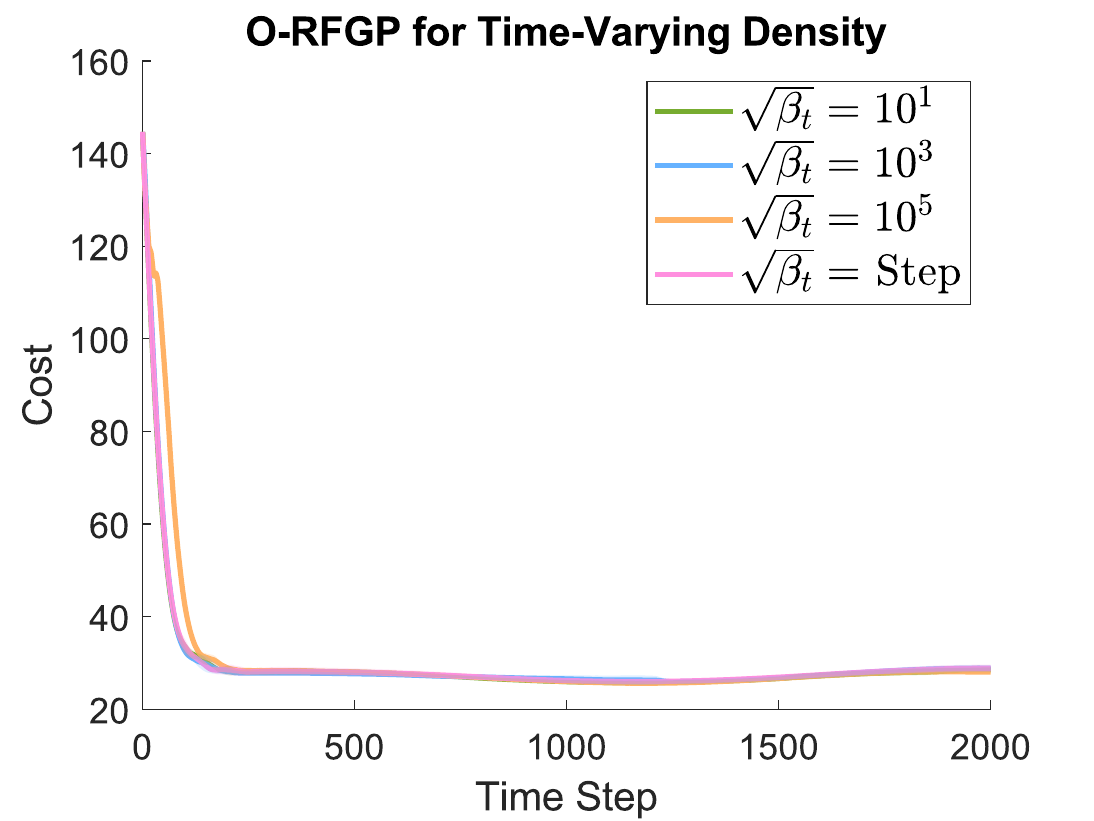}
\end{minipage}
\label{Sensitivity:beta,TV,Cost}
}
\subfigure[]{
\begin{minipage}[b]{0.35\textwidth}
\includegraphics[width=1\textwidth]{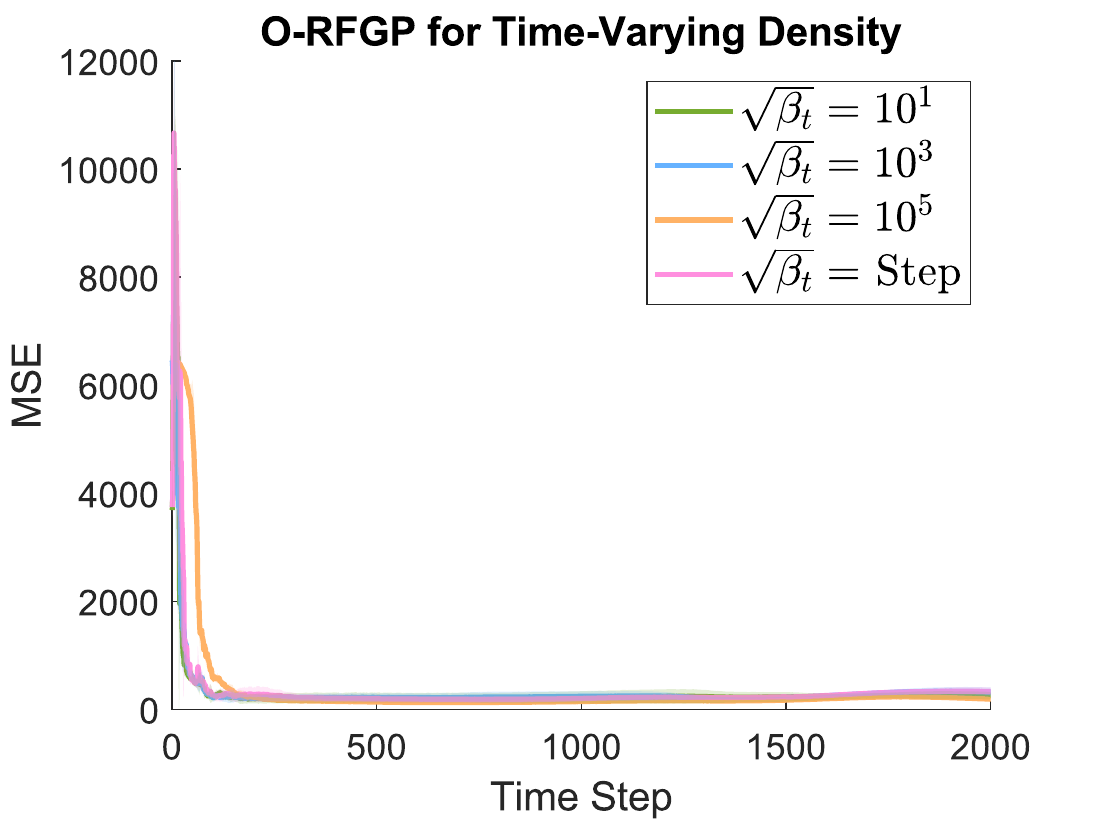}
\end{minipage}
\label{Sensitivity:beta,TV,MSE}
}
\caption{Sensitivity study of $\sqrt{\beta_t}$ when using O-RFGP to estimate the time-varying density. ``$\sqrt{\beta_t}$ = Step'' denotes a piecewise constant schedule: $\sqrt{\beta_t} = 10^1$ for time steps $0$-$500$, $\sqrt{\beta_t} = 10^2$ for time steps $500$-$1000$, $\sqrt{\beta_t} = 10^3$ for time steps $1000$-$1500$, and $\sqrt{\beta_t} = 10^4$ for time steps $1500$-$2000$.
}
\label{Sensitivity:beta,TV}
\vspace{-0.2cm}
\end{figure}

\begin{table}[htbp]
\vspace{-0.2cm}
\centering
\caption{Cost when using O-RFGP with different values of $D$ to estimate the time-varying density. Associated with Fig.~\ref{Sensitivity:beta,TV,Cost}.}
\begin{tabular}{lcccccc}
\toprule
\multicolumn{1}{c}{$\sqrt{\beta_t}$} 
& \textbf{$T=50$} 
& \textbf{$T=100$} 
& \textbf{$T=500$}
& \textbf{$T=1000$}
& \textbf{$T=1500$}
& \textbf{$T=2000$}
\\ 
\midrule
\textbf{$10^1$} 
& $59.13 \pm 0.45$  
& $33.45 \pm 1.28$  
& $28.02 \pm 0.59$  
& $25.99 \pm 0.35$ 
& $26.68 \pm 0.41$ 
& $28.56 \pm 0.52$  
\\
\textbf{$10^3$} 
& $60.60 \pm 0.49$  
& $32.76 \pm 0.79$  
& $27.68 \pm 0.22$  
& $26.58 \pm  0.78$ 
& $26.77 \pm 0.52$ 
& $28.84 \pm 0.71$ 
\\
\textbf{$10^5$} 
& $97.14 \pm 5.16$  
& $42.79 \pm 1.22$  
& $28.09 \pm 0.13$  
& $26.01 \pm 0.21$ 
& $26.68 \pm 0.21$ 
& $28.04 \pm 0.27$ 
\\
\textbf{$\mathrm{Step}$} 
& $60.19 \pm 1.37$  
& $33.71 \pm 1.62$  
& $28.02 \pm 0.83$  
& $26.22 \pm 0.51$ 
& $26.94 \pm 0.53$ 
& $28.86 \pm 0.78$ 
\\
\bottomrule
\end{tabular}
\label{table:Sensitivity:beta,TV,Cost}
\end{table}

\begin{table}[htbp]
\vspace{-0.2cm}
\centering
\caption{MSE ($\times 10^3$) when using O-RFGP with different values of $D$ to estimate the time-varying density. Associated with Fig.~\ref{Sensitivity:beta,TV,MSE}.}
\begin{tabular}{lcccccc}
\toprule
\multicolumn{1}{c}{$\sqrt{\beta_t}$} 
& \textbf{$T=50$} 
& \textbf{$T=100$} 
& \textbf{$T=500$}
& \textbf{$T=1000$}
& \textbf{$T=1500$}
& \textbf{$T=2000$}
\\ 
\midrule
\textbf{$10^1$} 
& $0.59 \pm 0.12$  
& $0.26 \pm 0.10$  
& $0.22 \pm 0.09$  
& $0.22 \pm 0.11$ 
& $0.23 \pm 0.08$ 
& $0.25 \pm 0.07$  
\\
\textbf{$10^3$} 
& $0.58 \pm 0.09$  
& $0.21 \pm 0.06$  
& $0.22 \pm 0.08$  
& $0.24 \pm 0.09$ 
& $0.22 \pm 0.004$ 
& $0.32 \pm 0.09$ 
\\
\textbf{$10^5$} 
& $5.22 \pm 0.88$  
& $0.57 \pm 0.07$  
& $0.14 \pm 0.02$  
& $0.14 \pm 0.03$ 
& $0.17 \pm 0.01$ 
& $0.19 \pm 0.04$ 
\\
\textbf{$\mathrm{Step}$} 
& $0.54 \pm 0.05$  
& $0.27 \pm 0.05$  
& $0.20 \pm 0.07$  
& $0.20 \pm 0.05$ 
& $0.23 \pm 0.04$ 
& $0.33 \pm 0.08$ 
\\
\bottomrule
\end{tabular}
\label{table:Sensitivity:beta,TV,MSE}
\vspace{-0.2cm}
\end{table}

To analyze the influence of the non-decreasing sequence $\beta_t$, the results in Fig.~\ref{Sensitivity:beta,TI}, Table~\ref{table:Sensitivity:beta,TI,Cost}, Table~\ref{table:Sensitivity:beta,TI,MSE}, Fig.~\ref{Sensitivity:beta,TV}, Table~\ref{table:Sensitivity:beta,TV,Cost} and Table~\ref{table:Sensitivity:beta,TV,MSE} show that ORC maintains sub-linear regret across a wide range of $\beta_t$ values. Although a smaller $\beta_t$ value yields slightly lower MSE, we select $\sqrt{\beta_t}=10^5$ in most simulations as a larger value provides more precise boundary visualization in heatmaps.

\subsection{Computational complexity}
\label{sec:computationalcomplexity}

\begin{figure}[h]
    \centering
    \includegraphics[scale=0.35]{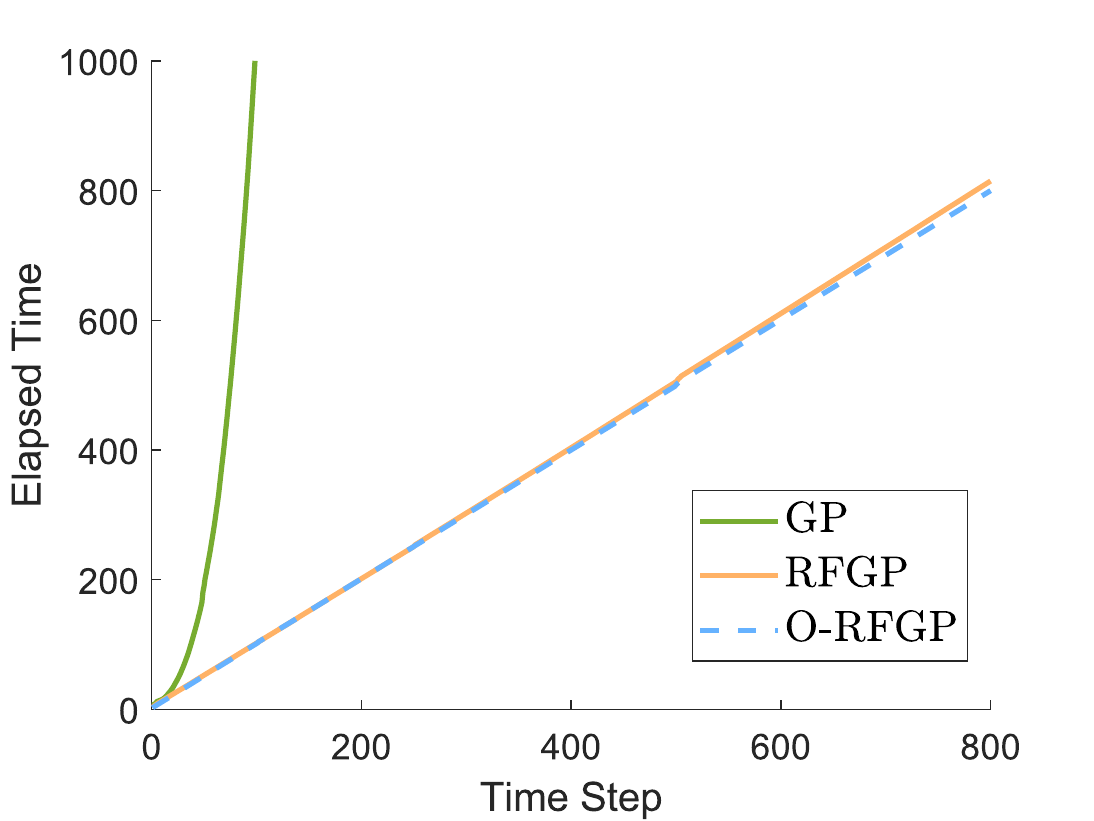}
    \caption{Comparison of the computational efficiency among GP, RFGP, and O-RFGP. The unit of the total elapsed time is the average value of the per-iteration running time of O-RFGP.}
    \label{runningtime}
\end{figure}

Fig.~\ref{runningtime} shows that the running time of GP is significantly longer than those of RFGP and O-RFGP. At around time step 200, both RFGP and O-RFGP achieve roughly a thousandfold improvement in computational efficiency over GP.
The results verifies the theoretical results presented in Table~\ref{table:computation_complex_memory_cost}. Notably, due to its lightweight computation, unlike the standard GP, O-RFGP is well-suited for real-time implementation on physical robotic platforms, as highlighted in Section~\ref{Section:Introduction} and demonstrated in Section~\ref{sec:physical experiment}.

\begin{figure}[h]
    \centering
    \includegraphics[scale=0.35]{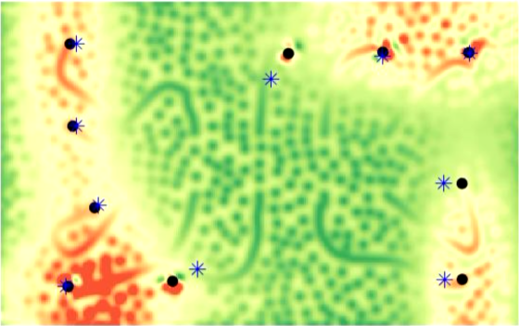}
    \caption{The learned time-varying density using GP (with all measurements collected from the beginning up to the current time step leveraged) at time step $760$, where the black dots represent the positions of robots and the blue asterisks represent the centers of mass of their Voronoi cells.} 
    \label{GP_TV_alldata_iter760}
    \vspace{-3mm}
\end{figure}
In addition, we conduct a case study using GP, where the robots are ignorant of the time-dependency of the density function. Thus, all measurements collected from the beginning of the experiment up to the current time step are leveraged for density learning. In addition to the computational burden of GP, we find in a simulation that the learning result starts to diverge at around time step $760$, as shown in Fig.\ref{GP_TV_alldata_iter760}. 
Although the density learning error of the GP decreases as the number of training samples increases, as analyzed in Section \ref{sec:regret}, the number of mismatched data points increases due to the time-dependency of the density. This result indicates the limitations of the methods proposed in \citet{srinivas2012information, williams2006gaussian, calandriello2019gaussian} in time-varying scenarios. 
Overall, as discussed in Section~\ref{Section:Introduction}, GP is not suitable for real-time applications due to its high computational complexity, consistent with Table~\ref{table:computation_complex_memory_cost} and Section~\ref{sec:computationalcomplexity}.

\end{appendices}

\end{document}